\renewcommand{\eqref}[1]{\textup{Eq.~(\ref{#1})}}
\def \L{\mathcal{L}}
\def \z{\mathbf{z}}
\def \L{\mathcal{L}}
\def \x{\mathbf{x}}
\def \L{\mathcal{L}}
\newtheorem{lemma}{Lemma}
\newcolumntype{C}[1]{>{\centering\arraybackslash}m{#1}}
\title{Semi-Supervised Contrastive Learning with Orthonormal Prototypes}
\author{
Huanran Li, Manh Nguyen, \& Daniel Pimentel-Alarc\'on\\
Department of Electrical Engineering, Statistics, Biostatistics\\
Wisconsin Institute of Discovery\\
University of Wisconsin-Madison\\
\texttt{\{hli488\}\{mdnguyen4\}\{pimentelalar\}@wisc.edu}
}
\begin{document}

\maketitle

\begin{abstract}
Contrastive learning has emerged as a powerful method in deep learning, excelling at learning effective representations through contrasting samples from different distributions. However, dimensional collapse, where embeddings converge into a lower-dimensional space, poses a significant challenge, especially in semi-supervised and self-supervised setups. In this paper, we first identify a critical learning-rate threshold, beyond which standard contrastive losses converge to collapsed solutions. Building on these insights, we propose CLOP, a novel semi-supervised loss function designed to prevent dimensional collapse by promoting the formation of orthogonal linear subspaces among class embeddings. Through extensive experiments on real and synthetic datasets, we demonstrate that CLOP improves performance in image classification and object detection tasks while also exhibiting greater stability across different learning rates and batch sizes.
\end{abstract}

\section{Introduction}
Recent advancements in deep learning have positioned {Contrastive Learning} as a leading paradigm, largely due to its effectiveness in learning representations by contrasting samples from different distributions while aligning those from the same distribution. Prominent models in this domain include SimCLR \cite{chen2020simple}, Contrastive Multiview Coding (CMC) \cite{tian2020contrastive}, VICReg \cite{bardes2021vicreg}, BarLowTwins \cite{zbontar2021barlow}, among others \cite{henaff2020data, li2020prototypical, wu2018unsupervised}.
These models share a common two-stage framework: representation learning and fine-tuning. In the first stage, representation learning is performed in a self-supervised manner, where the model is trained to map inputs to embeddings using contrastive loss to separate samples from different labels. In the second stage, fine-tuning occurs under a supervised setup, where labeled data is used to classify embeddings correctly. For practical applicability, a small amount of labeled data is required in the fine-tuning stage to produce meaningful classifications, making the overall pipeline semi-supervised. 
Empirical evidence demonstrates that these models, even with limited labeled data (as low as 10\%), can achieve performance comparable to fully-supervised approaches on moderate to large datasets \cite{jaiswal2020survey}.

Despite the effectiveness of contrastive learning on largely unlabeled datasets, a common issue encountered during the training process is {dimensional collapse}. As pointed out by \cite{ fu2022details, gill2024engineering, hassanpour2024overcoming,jing2021understanding,rusak2022content, tao2024breaking, xue2023features}, this phenomenon describes the collapse of output embeddings from the neural network into a lower-dimensional space, reducing their spatial utility and leading to indistinguishable classes. There are two main approaches to resolve this issue: augmentation modification \cite{fu2022details, jing2021understanding, tao2024breaking, xue2023features} and loss modification \cite{fu2022details, hassanpour2024overcoming, rusak2022content}.
In this paper, we propose a semi-supervised loss function CLOP to address the issue of collapse. To address dimensional collapse, our approach selects prototypes similarly to \cite{gill2024engineering, zhu2022balanced}. The key distinction is that our method aims to push the embeddings toward distinct orthogonal linear subspaces, allowing them to occupy a higher-rank space. We demonstrate through experiments that CLOP is more effective for image classification and object detection tasks.

The main contributions of this paper can be viewed from two perspectives. First, we identify and analyze the detrimental effects of inappropriate learning rates on contrastive learning losses that rely exclusively on cosine similarity. Specifically, we show that when embeddings collapse into a rank-1 subspace, the optimizer converges to an uninformative stationary point. Through a gradient‐descent analysis, we establish the existence of a critical learning-rate range outside of which training inevitably converges to this undesirable point.
Building upon these insights, we propose a novel contrastive loss term, termed CLOP, designed to encourage embeddings of partial training datasets to cluster around a set of orthonormal prototypes. This loss term is applicable to both semi-supervised and fully-supervised contrastive learning scenarios, particularly when only a subset of the training data is labeled.
We validate CLOP’s effectiveness through extensive experiments on embedding visualization, image classification, and object detection tasks, using both balanced and imbalanced datasets. Our empirical results demonstrate that CLOP consistently outperforms baseline methods, exhibiting remarkable robustness and stability across a wide range of learning rates and reduced batch sizes.

\textbf{Paper Organization}
In Section \ref{sec-relatedwork}, we provide essential background information and discuss recent advancements in both self-supervised and supervised contrastive learning, as well as analyzing the dimensional collapse phenomenon associated with contrastive learning methods. Section \ref{sec-theory} elaborates on the motivation behind our approach through simulations and detailed gradient analysis. Subsequently, in Section \ref{sec-model}, we introduce our proposed model, \textbf{CLOP}. Finally, Section \ref{sec-experiment} presents extensive experimental evaluations conducted on image datasets.

\section{Related Work}
\label{sec-relatedwork}
Contrastive learning has gained prominence in deep learning for its ability to learn meaningful representations by pulling together similar (positive) pairs and pushing apart dissimilar (negative) pairs in the embedding space. Positive pairs are generated through techniques like data augmentation, while negative pairs come from unrelated samples, making contrastive learning particularly effective in self-supervised tasks like image classification. Pioneering models such as SimCLR \cite{chen2020simple}, CMC \cite{tian2020contrastive}, VICReg \cite{bardes2021vicreg}, and Barlow Twins \cite{zbontar2021barlow} share the objective of minimizing distances between augmented versions of the same input (positive pairs) and maximizing distances between unrelated inputs (negative pairs). SimCLR maximizes agreement between augmentations using contrastive loss, while CMC extends this to multi-view learning \cite{chen2020simple, tian2020contrastive}. VICReg introduces variance-invariance-covariance regularization without relying on negative samples \cite{bardes2021vicreg}, and Barlow Twins reduce redundancy between different augmentations \cite{zbontar2021barlow}.

Recent innovations have improved contrastive learning across various domains. For instance, methods like structure-preserving quality enhancement in CBCT images \cite{kang2023structure} and false negative cancellation \cite{huynh2022boosting} have enhanced image quality and classification accuracy. In video representation, cross-video cycle-consistency and inter-intra contrastive frameworks \cite{wu2021contrastive, tao2022improved} have shown significant gains. Additionally, contrastive learning has advanced sentiment analysis \cite{xu2023improving}, recommendation systems \cite{yang2022supervised}, and molecular learning with faulty negative mitigation \cite{wang2022improving}. \cite{xiao2024simple} introduces GraphACL, a novel framework for contrastive learning on graphs that captures both homophilic and heterophilic structures without relying on augmentations. 


\subsection{Contrastive Loss}
In unsupervised learning, \cite{wu2018unsupervised} introduced InfoNCE, a loss function defined as:
\begin{equation}
    \mathcal{L}_{\text{infoNCE}} = - \sum_{i \in I} \log \frac{\exp(\mathbf{z}_i^\top \mathbf{z}_{j(i)}/\tau)}{\sum_{a \neq i} \exp(\mathbf{z}_i^\top \mathbf{z}_a/\tau)} 
    \label{eq-InfoNCE_loss}
\end{equation}
where $\mathbf{z}_i$ is the embedding of sample $i$, $j(i)$ its positive pair, and $\tau$ controls the temperature.

Recent refinements focus on (1) component modifications, (2) similarity adjustments, and (3) novel approaches. \cite{li2020prototypical} use EM with k-means to update centroids and reduce mutual information loss, while \cite{wang2022rethinking} add L2 distance to InfoNCE, though both underperform state-of-the-art (SOTA) techniques. \cite{xiao2020should} reduce noise with augmentations, and \cite{yeh2022decoupled} improve gradient efficiency with Decoupled Contrastive Learning, though neither surpasses SOTA.
In similarity adjustments, \cite{chuang2020debiased} propose a debiased loss, and \cite{ge2023hyperbolic} use hyperbolic embeddings, but neither outperforms SOTA. Novel methods include min-max InfoNCE \cite{tian2020makes}, Euclidean-based losses \cite{bardes2021vicreg}, and dimension-wise cosine similarity \cite{zbontar2021barlow}, achieving competitive performance without softmax-crossentropy.

\subsection{Semi-Supervised Contrastive Learning}

Semi-supervised contrastive learning effectively leverages both labeled and unlabeled data to learn meaningful representations. \cite{zhang2022semi} introduced a framework with similarity co-calibration to mitigate noisy labels by adjusting the similarity between pairs. \cite{inoue2020semi} proposed Generalized Contrastive Loss (GCL), which unifies supervised and unsupervised learning for speaker recognition, while \cite{kim2021selfmatch} combined contrastive self-supervision with consistency regularization in SelfMatch.
\cite{sohn2020fixmatch} introduced FixMatch, which combines pseudo-labeling with consistency regularization. In this approach, weakly augmented samples generate pseudo-labels that guide strongly augmented versions, ensuring robust semi-supervised learning (SSL) performance. \cite{yang2022class} enhanced SSCL by enforcing class-wise consistency in learned representations, improving robustness to class imbalance and increasing generalization across datasets.
\cite{zheng2022simmatch} proposed SimMatch, a framework that unifies contrastive learning and consistency regularization by optimizing both instance-level alignment and class-level semantic consistency, leading to improved SSL feature representations. Building on this, \cite{zheng2023simmatchv2} introduced SimMatch-V2, refining the balance between contrastive and consistency learning objectives, further enhancing transferability and performance in semi-supervised settings.

\subsection{Dimensional Collapse}
For dimensional collapse in contrastive learning, \cite{jing2021understanding} examine dimensional collapse in self-supervised learning. They attribute this to strong augmentations distorting features and implicit regularization driving weights toward low-rank solutions. 
Similarly, \cite{xue2023features} explore how simplicity bias leads to class collapse and feature suppression, with models favoring simpler patterns over complex ones. They suggest increasing embedding dimensionality and designing augmentation techniques that preserve class-relevant features to counter this bias and promote diverse feature learning. \cite{fu2022details} emphasize the role of data augmentation and loss design in preventing class collapse, proposing a class-conditional InfoNCE loss term that uniformly pulls apart individual points within the same class to enhance class separation.
In supervised contrastive learning, \cite{gill2024engineering} propose loss function modifications to follow an ETF geometry by selecting prototypes that form this structure. In graph contrastive learning, \cite{tao2024breaking} introduce a whitening transformation to decorrelate feature dimensions, avoiding collapse and enhancing representation capacity. 
Finally, \cite{rusak2022content} investigate the preference of contrastive learning for content over style features, leading to collapse. They propose to leverage adaptive temperature factors in the loss function to improve feature representation quality.

\section{Motivation}
\label{sec-theory}
This section investigates the cause of {dimensional collapse} in the perspective of learning rates. We first demonstrate that {dimensional collapse} represents a local stationary point of the InfoNCE loss by showing that linear embeddings result in a zero gradient (Lemma~\ref{subspace-embeddings}). While our demonstration utilizes the InfoNCE loss, this conclusion generalizes to most current loss functions relying solely on cosine similarity as their metric, encompassing unsupervised \cite{henaff2020data, chen2020simple, cui2021parametric, xiao2020should, yeh2022decoupled, wang2022rethinking, li2024transfusion}, semi-supervised \cite{hu2021semi, shen2021semi}, and supervised contrastive learning \cite{khosla2020supervised, cui2021parametric, peeters2022supervised, li2022targeted}. Subsequently, we discuss how a large learning rate induces a shift in the embedding mean through comparisons between pairs of negative samples, ultimately leading to {dimensional collapse}. 


\label{sec:complete_collapse}

The InfoNCE loss (\eqref{eq-InfoNCE_loss}) aims to encourage the embeddings to form distinguishable clusters in high-dimensional space, thereby facilitating classification for downstream models. {However, in Lemma~\ref{subspace-embeddings}, we demonstrate that the worst-case scenario — where all embeddings become identical or co-linear — also constitutes a local optimum for the InfoNCE loss. This observation suggests that, from a theoretical perspective, InfoNCE exhibits instability, as both the best and worst solutions can lead to stationary points.}
\begin{lemma}
    \label{subspace-embeddings}
   Let \(\mathcal{F}: \mathbb{R}^{m} \to \mathbb{R}^{m'}\) be a family of Contrastive Learning structures, where \(m\) and \(m'\) denote the dimensions of the inputs and embeddings, respectively. If a function \(f \in \mathcal{F}\) is trained using the InfoNCE loss, then there exist infinitely many local stationary points where all embeddings produced by \(f\) are \underline{all equal} or \underline{co-linear}.
\end{lemma}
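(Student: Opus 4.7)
The plan is to show that the InfoNCE gradient with respect to each embedding vanishes at every configuration in which all embeddings coincide with a common vector, and that this immediately lifts to stationarity in the parameter space of $f$ via the chain rule. Because the common vector $\mathbf{v}$ may be chosen from a continuum, this already produces infinitely many distinct collapsed stationary points.

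First, I would write out $\nabla_{\mathbf{z}_k}\mathcal{L}_{\text{infoNCE}}$ explicitly, being careful that $\mathbf{z}_k$ appears in three distinct roles in the sum in \eqref{eq-InfoNCE_loss}: as the anchor of its own term (through both the numerator $\mathbf{z}_k^\top \mathbf{z}_{j(k)}/\tau$ and the log-partition $\log\sum_{a\neq k}\exp(\mathbf{z}_k^\top\mathbf{z}_a/\tau)$); as the positive pair $\mathbf{z}_{j(i)}$ whenever $j(i)=k$; and as a negative index inside the denominators of every other anchor $i\neq k$. Grouping these contributions yields a decomposition into a ``positive pull'' and a ``softmax-weighted negative push,'' each of which is a linear combination of the remaining embeddings with coefficients given by the current softmax probabilities.

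Second, I would evaluate this gradient at the candidate stationary point $\mathbf{z}_i=\mathbf{v}$ for all $i$. At this configuration every inner product equals $\|\mathbf{v}\|^2/\tau$, so all softmax weights collapse to the uniform value $1/(n-1)$, and every appearance of an embedding in the gradient is simply a scalar multiple of $\mathbf{v}$. A direct count of the coefficients (assuming symmetric positive pairing $j(j(i))=i$) shows the positive-pull terms contribute $c\mathbf{v}/\tau$ and the negative-push terms contribute the same $c\mathbf{v}/\tau$ with opposite sign, so the two cancel and $\nabla_{\mathbf{z}_k}\mathcal{L}=\mathbf{0}$ for every $k$. By the chain rule, for any $f\in\mathcal{F}$ that realizes these collapsed outputs (e.g.\ parameter settings in which the final layer's image is the span of $\mathbf{v}$), the gradient with respect to $f$'s parameters also vanishes, establishing stationarity.

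Third, to obtain the co-linear case I would invoke the standard unit-norm convention for InfoNCE embeddings: under $\ell_2$ normalization the co-linear case $\mathbf{z}_i=\alpha_i\mathbf{v}$ with positive $\alpha_i$ collapses to the equal case already handled; for the sign-flipped variant $\mathbf{z}_i\in\{\pm\mathbf{v}\}$, the same counting argument applied with signs $\epsilon_i$ and inner products $\epsilon_i\epsilon_a$ gives a symmetric cancellation between positives and negatives. Infinitely many stationary points then arise simply by varying the direction $\mathbf{v}\in\mathbb{R}^{m'}$. The main obstacle I expect is the combinatorial bookkeeping in Step~1: because $\mathbf{z}_k$ enters both the numerator of its own term and the denominator of every other term, one must track all three appearances simultaneously and verify that the softmax probabilities at the collapsed point make the positive and negative contributions match exactly in coefficient, not merely in direction.
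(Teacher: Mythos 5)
Your first two steps are sound and essentially reproduce the paper's argument: compute the InfoNCE gradient with respect to each embedding, evaluate at the all-equal configuration $\z_i=\v$, observe that the softmax collapses to a uniform distribution, and check that the positive and negative contributions cancel exactly. This handles the ``all equal'' branch correctly.

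Step 3 has a genuine gap. You assert that for the mixed-sign co-linear configuration $\z_i = \epsilon_i \v$ with $\epsilon_i \in \{\pm 1\}$, ``the same counting argument \dots gives a symmetric cancellation,'' but this is false. Once signs are mixed, the inner products $\z_i^\top\z_a = \epsilon_i\epsilon_a$ take two distinct values $\pm 1$, so the softmax weights are no longer uniform, and the positive and negative contributions do not cancel. A quick four-point example with pairs $(\v,\v)$ and $(-\v,-\v)$ gives $\partial\mathcal L/\partial\z_1 = -\frac{8e_-}{\tau D}\v \neq 0$, where $e_\pm=\exp(\pm 1/\tau)$ and $D=e_++2e_-$. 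What \emph{is} true, and what the paper actually uses, is weaker but sufficient: the gradient is a scalar multiple of $\z_i$ itself, $\partial\mathcal L/\partial\z_i = \beta\,\z_i$. The lemma then follows by differentiating through the $\ell_2$-normalization layer: writing $\z_i=\x_i/\|\x_i\|_2$ for the pre-normalized output $\x_i$, the normalization Jacobian $\frac{1}{\|\x_i\|_2}\bigl(\mathbb I - \x_i\x_i^\top/\|\x_i\|_2^2\bigr)$ projects out the component along $\x_i$, so $\partial\mathcal L/\partial\x_i = 0$ even though $\partial\mathcal L/\partial\z_i \neq 0$. Without this projection step, the co-linear branch of the lemma does not go through, and your claimed cancellation would need to be proven rather than asserted — and in general it fails.
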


The proof of Lemma~\ref{subspace-embeddings} relies on the observation that the embeddings are only compared against each other. If all embeddings are either identical or co-linear, the gradient vanishes due to the lack of angular differences, as well as the normalization process. 
The full proof is presented in Appendix~\ref{proof1}.

\begin{figure}
\setlength{\tabcolsep}{0pt} 
  \centering
  \begin{tabular}{rrr}
    \includegraphics[height=0.28\linewidth, trim={0 0 2.55cm 0}, clip]{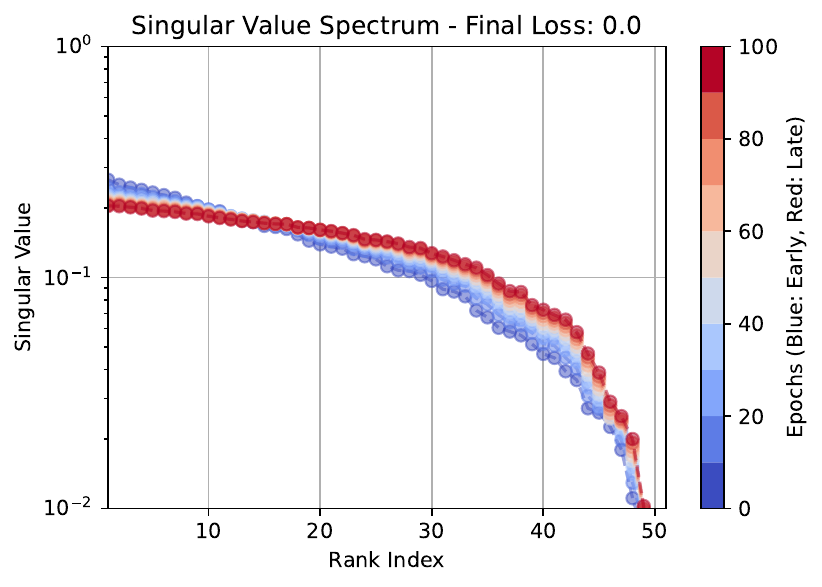} &
    \includegraphics[height=0.28\linewidth, trim={0.6cm 0 2.55cm 0}, clip]{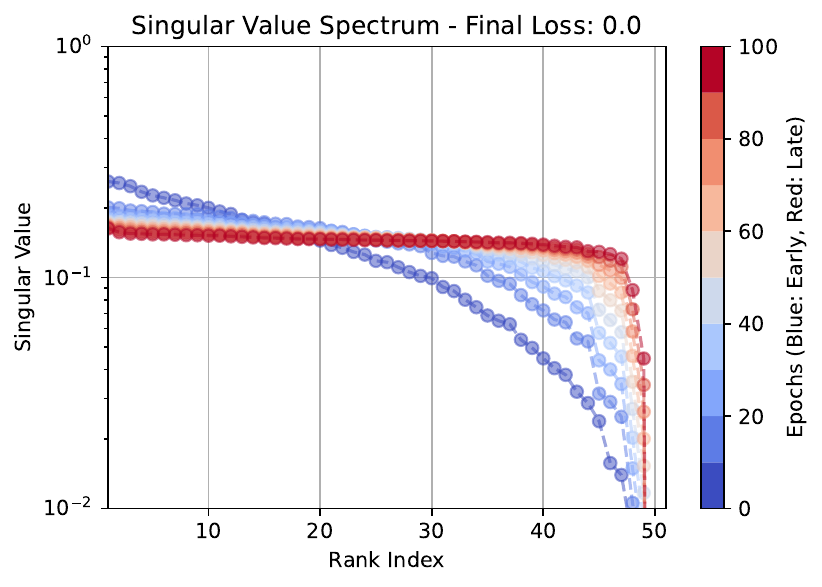} &
    \includegraphics[height=0.28\linewidth, trim={0.6cm 0 2.55cm 0}, clip]{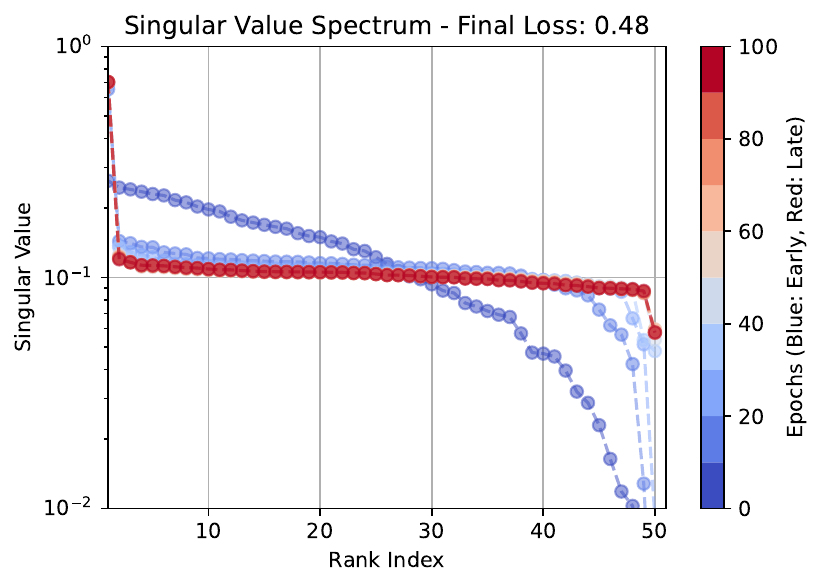} 
    \includegraphics[height=0.28\linewidth, trim={11.63cm 0 0 0}, clip]{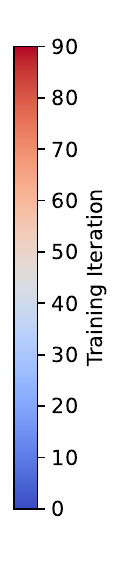}\\[-0.1cm]
    \includegraphics[height=0.28\linewidth, trim={0cm 0 2.32cm 0}, clip]{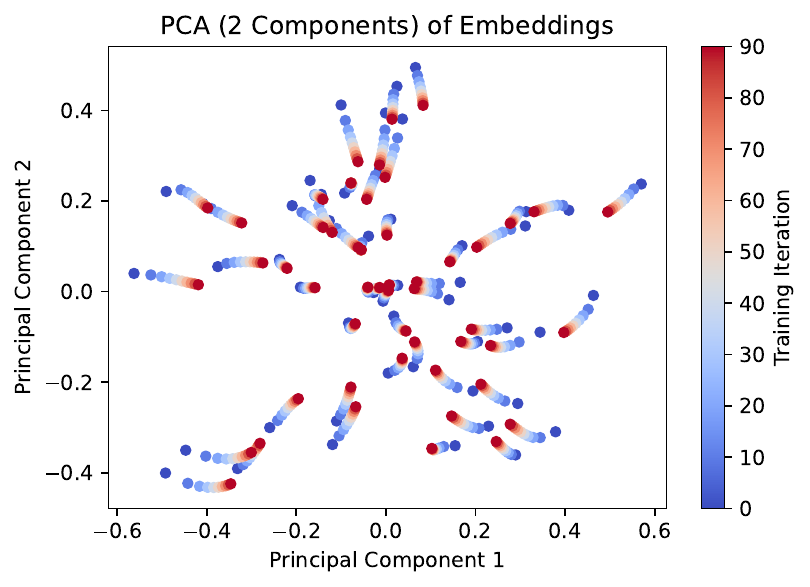} &
    \includegraphics[height=0.28\linewidth, trim={0.6cm 0 2.32cm 0}, clip]{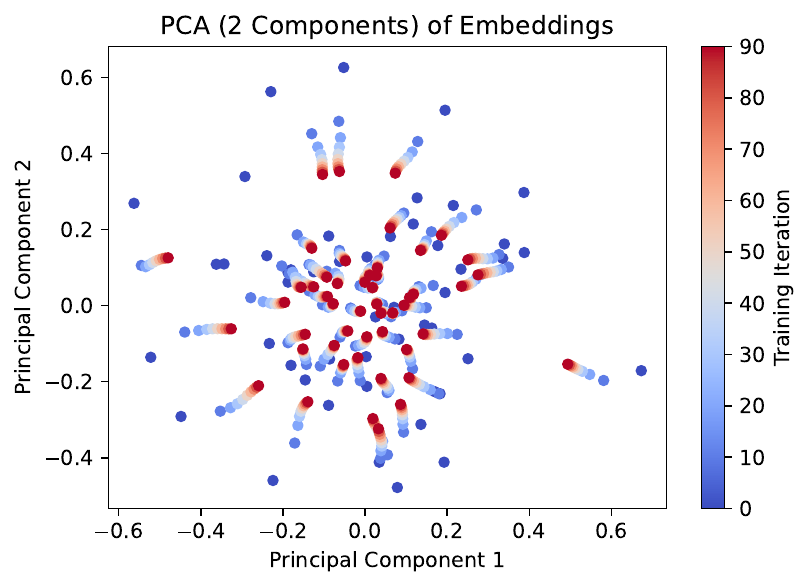} &
    \includegraphics[height=0.28\linewidth, trim={0.6cm 0 0 0}, clip]{Figures/LR_Collapse_Sim/PCA_figure_lr1000.pdf} \\
    \multicolumn{1}{c}{{\small \textbf{(a)} Learning rate of 0.01}} &
    \multicolumn{1}{c}{{\small \textbf{(b)} Learning rate of 0.1}} &
    \multicolumn{1}{c}{{\small \textbf{(c)} Learning rate of 1}}
  \end{tabular}
  \caption{Simulation with Repulsive Force on 50 simulated points in 50-dimensional space.}
  \label{fig:embeddings-lr}
\end{figure}

 The remainder of this section examines the role of a large learning rate in causing {dimensional collapse}. 
To understand the dynamics of contrastive learning, it is crucial to consider two forces acting on each embedding: the \textit{gravitational force} within the same pseudo class and the \textit{repulsive force} between different pseudo classes. Contrary to the common belief that the gravitational force is responsible for inducing collapse, we observe that the overshooting of the repulsive force could be directly related to the {dimensional collapse} in contrastive learning.

To better illustrate the repulsive force, we conducted a simulation using 50 randomly generated embeddings in 50-d space, where positive pairs initially coincide at a single point, reflecting a scenario where the model has successfully merged augmented variants from the same input source into one embedding. We subsequently performed gradient descent on these embeddings using the InfoNCE loss with temperature of 0.1, recording the embedding trajectories throughout training. The simulations were executed across three distinct learning rates: 0.01, 0.1, and 1. The embedding singular value spectrum and the first two principal components of these embeddings are visualized in Figure~\ref{fig:embeddings-lr}. At learning rates of 0.01 and 0.1, the repulsive force inherent to the InfoNCE loss effectively redistributes the embeddings across a more uniform space, as indicated by the more evenly dispersed singular values across the embedding dimensions. Conversely, at a learning rate of 1, the embeddings fail to redistribute and instead collapse into a one-dimensional subspace, corroborating the stationary point described by Lemma~\ref{subspace-embeddings}. Furthermore, analysis of the first two principal components clearly illustrates a significant shift in the embedding mean and a reduction in variance throughout the training iterations. This observation indicates that larger learning rates induce a substantial shift in the embedding mean, consequently accelerating the collapse process by aligning gradients in similar directions.
Thus, establishing an upper bound on the learning rates is crucial, as it limits the shift in embedding mean and would contribute to preventing dimensional collapse.

To better understand this phenomenon, we conducted a theoretical analysis of the gradient descent process using the InfoNCE loss, detailed in Appendix~\ref{proof2}. Briefly, our analysis reveals that, after performing a single gradient descent step, the upper bound on the norm of the embedding mean is scaled by the factor:
$$\|\bm{\mu}_1\|_2 \leq \left| \left(1 - \frac{2\eta}{\tau}\right)\frac{2}{\sigma^2} \frac{{{|\mathcal{N}|}}}{1 + |\mathcal{N}| }\right| \|\bm{\mu}_0\|_2,$$
where $\sigma$ represents the minimum embedding norm before normalization, and $|\mathcal{N}|$ is the number of negative samples within the batch. To prevent an increase in the embedding mean's norm, we constrain this scaling factor to be less than or equal to 1. This constraint yields an optimal learning rate range given by:
$\eta \in \left[\frac{\tau}{2} \left(1 - \frac{\sigma^2 (1 + |\mathcal{N}|)}{2|\mathcal{N}|}\right), \frac{\tau}{2} \left(1 + \frac{\sigma^2 (1 + |\mathcal{N}|)}{2|\mathcal{N}|} \right)\right].$ 
This suggests that, under the assumption that positive sample pairs are successfully merged into the same embedding, the optimal learning rate that reduce mean shift is $\frac{\tau}{2}$.

\section{CLOP: Populating Embedding Rank with Orthonormal Prototypes}
\label{sec-model}
To avoid the issue of embedding collapsing into a rank-1 linear subspace, we introduce a novel approach that promotes point isolation by adding an additional term to the loss function for contrastive learning. Specifically, we initialize a group of \textit{orthonormal prototypes}. The number of {orthonormal prototypes} matches the total number of classes in the dataset. We then maximize the similarity between the {orthonormal prototypes} and the labeled samples in the training set. 

Formally, let $\mathcal{S}$ be the labeled training set containing pairs of embeddings and labels, denoted as $\mathcal{S} = \{(\mathbf{z}_i, y_i) \mid i \in \{1, \dots, |\mathcal{S}|\}\}$. The set of prototypes, denoted as $\mathcal{C}$, is defined as $\mathcal{C} = \{\mathbf{c}_1, \dots, \mathbf{c}_k\}$, where $k$ represents the number of classes in the dataset. To generate the prototypes $\mathcal{C}$, we randomly sample $k$ i.i.d. vectors from an $m'$-dimensional space, where $|\mathbf{z}_i| = m'$. Subsequently, we apply singular value decomposition (SVD) to obtain the orthonormal basis, denoted as $\mathcal{C}$. This ensures that each prototype $\mathbf{c}_i$ is initialized as a unit vector, orthogonal to all other prototypes, at the beginning of the training process. The {CLOP} loss is formulated as follows:
\begin{align}
    \mathcal{L}_{\text{CLOP}} = \mathcal{L}_{\text{infoNCE}} + \lambda \frac{1}{|\mathcal{S}|} \sum_{i=1}^{|\mathcal{S}|} (1 - s(\mathbf{z}_i, \mathbf{c}_{y_i})),
    \label{CLOP_loss}
\end{align}
where $s(\cdot, \cdot)$  denotes the similarity metric, typically chosen to be the same as that used in $\mathcal{L}_{\text{infoNCE}}$, namely cosine similarity.

The primary objective of the {CLOP} loss is to align all embeddings corresponding to the same class towards a common target prototype, $\mathbf{c}_{y_i}$. Beyond the ``gravitational force" and ``repulsive force" provided by the main contrastive loss, the {CLOP} loss introduces a supervised ``pulling force" that prevents collapse by pulling labeled embeddings into class-specific orthogonal subspaces.
It is important to note that, without additional constraints, samples outside of set $\mathcal{S}$ may still converge to other unspecified embeddings, potentially collapsing into a rank-1 subspace. However, a fundamental assumption in contrastive learning is that augmented samples are treated as being drawn from the same distribution as the original input data from the same class. Thus, the ``gravitational force'' between embeddings of the same class should pull unsupervised embeddings toward the target prototypes.

To assess the supervisory efficacy of CLOP, we visualized the CIFAR100's representations learned by both CLOP and SupCon under two label regimes (10\% and 100\%) in Table~\ref{tab:fivefigs}. Embeddings were projected to two dimensions using t-distributed Stochastic Neighbor Embedding (t-SNE) \cite{van2008visualizing}, and we also examined each model’s singular value spectrum. Both methods employ a ResNet-50 backbone with linear projection heads four times wider than standard (denoted as \textit{ResNet-50 (4x)}), trained for 500 epochs with a batch size of 256 and an initial learning rate of 1.0.
CLOP yields a higher effective rank in its embedding space and produces more compact, well-separated clusters under both limited (10\%) and full (100\%) supervision. Quantitatively, linear fine-tuning on ImageNet confirms this advantage: at 10\% label usage, CLOP achieves a top-1 accuracy of 59.57\%, outperforming SupCon’s 55.01\% by 4.56\%; at 100\% label usage, CLOP reaches 82.06\% versus SupCon’s 74.75\%, a gain of 7.31\%. This experiment is conducted on a single NVIDIA A100 GPU, with each run completing within 4 hours of training.

\begin{table}
  \centering
  \setlength{\tabcolsep}{3pt}
  \begin{tabular}{
     c| c c c
    }
    \toprule
      & Singular Value Spectrum
      & SupCon
      & CLOP \\
    \midrule
    \multirow{2}{*}[12ex]{\rotatebox{90}{10\% Label}}
      & \includegraphics[height=3.4cm,trim={0.7cm 0.7cm 0cm 0.66cm},clip]{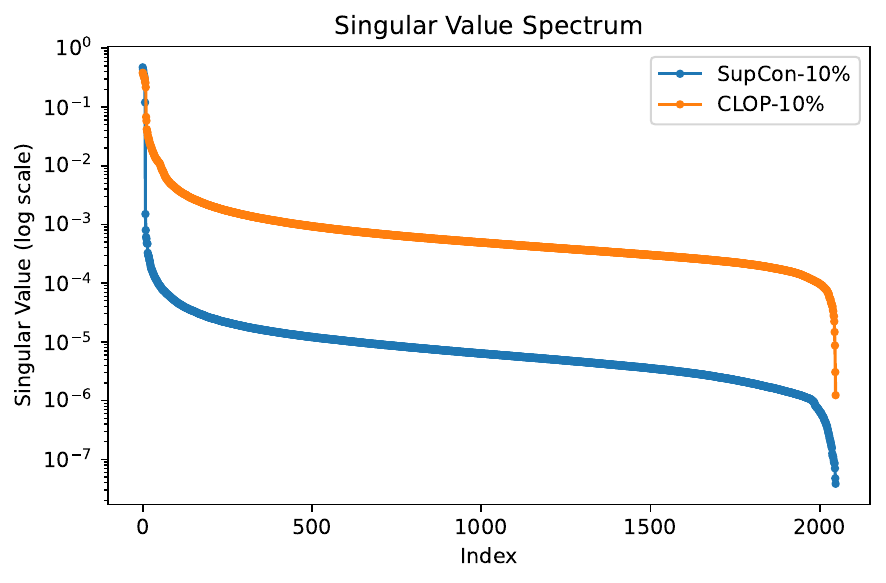}
      & \includegraphics[height=3.4cm,trim={0.7cm 0.7cm 2cm 0.66cm},clip]{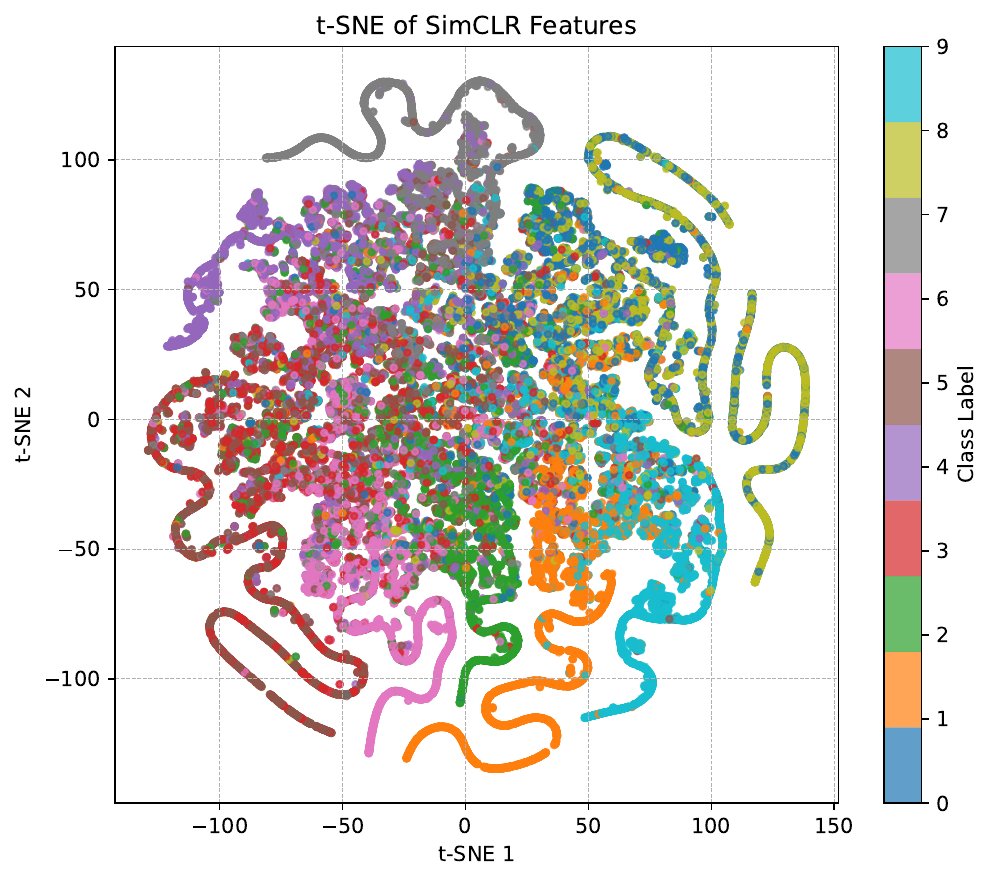}
      & \includegraphics[height=3.4cm,trim={0.7cm 0.7cm 2cm 0.66cm},clip]{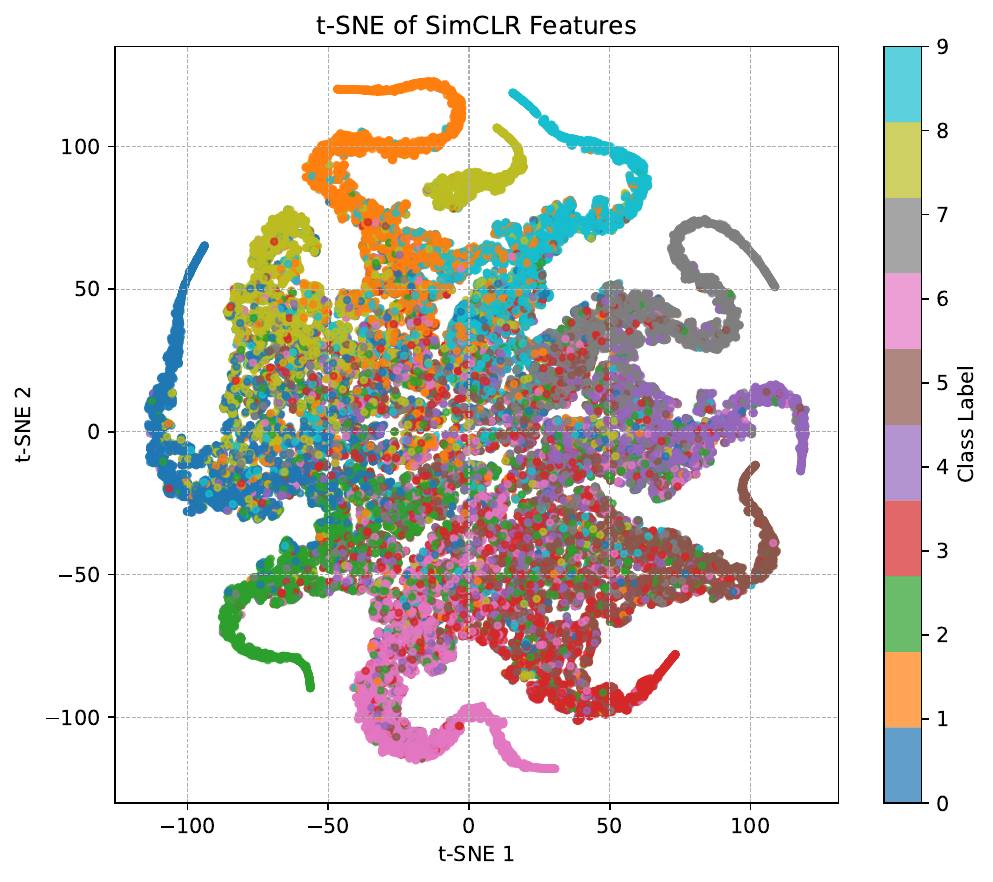} \\
      &  
      & Top-1 Acc: 55.01\%
      & Top-1 Acc: 59.57\% \\
    \midrule
    \multirow{2}{*}[12ex]{\rotatebox{90}{100\% Label}}
      & \includegraphics[height=3.4cm,trim={0.7cm 0.7cm 0 0.66cm},clip]{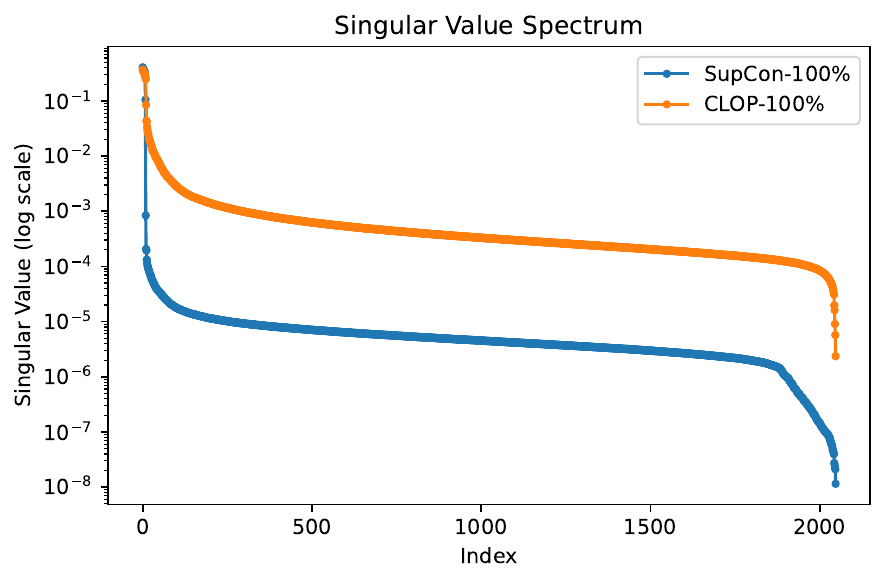}
      & \includegraphics[height=3.4cm,trim={0.7cm 0.7cm 2cm 0.66cm},clip]{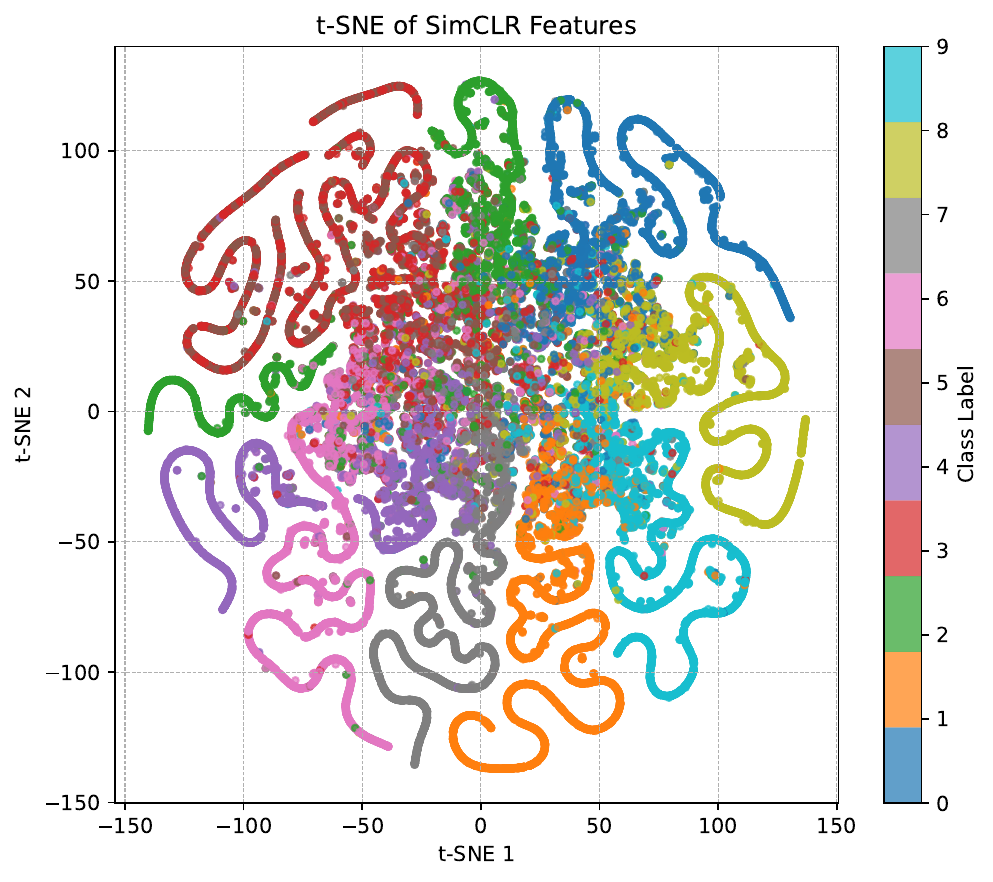}
      & \includegraphics[height=3.4cm,trim={0.7cm 0.7cm 2cm 0.66cm},clip]{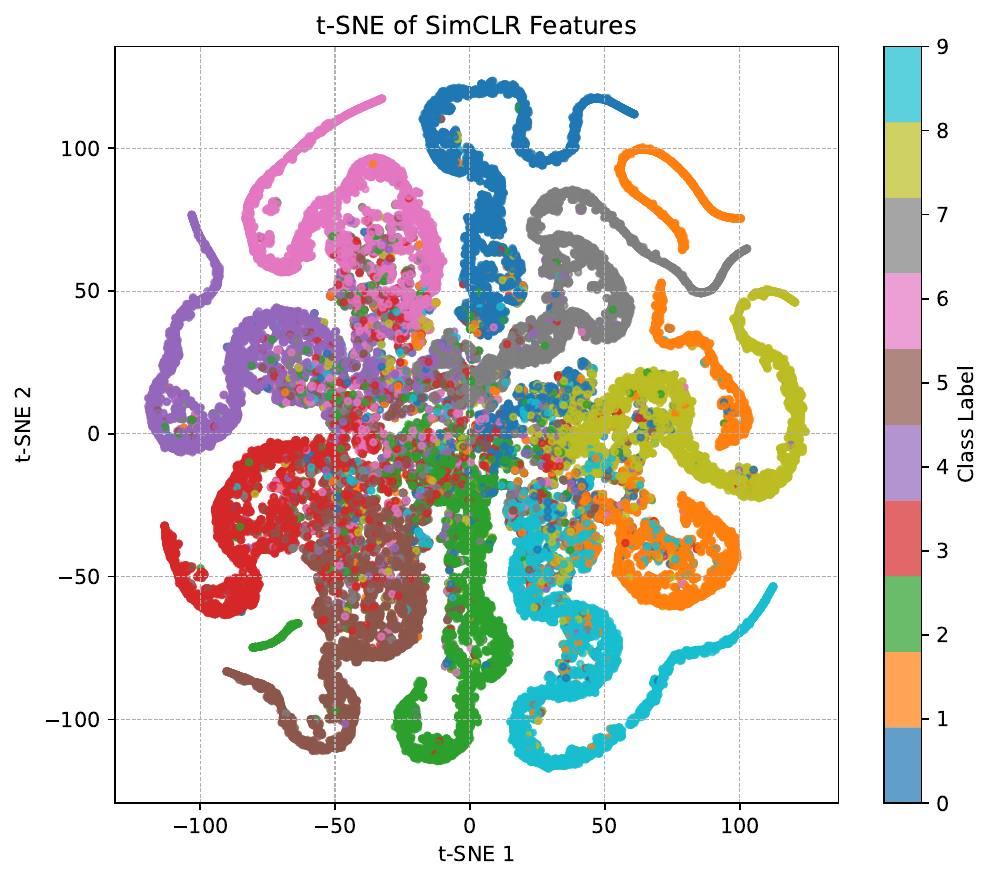} \\
      &  
      & Top-1 Acc: 74.75\%
      & Top-1 Acc: 82.06\% \\
    \bottomrule
  \end{tabular}
  \vspace{0.1cm}
  \caption{SupCon vs.\ CLOP under different label usage}
  \label{tab:fivefigs}
\end{table}

\section{Experiment}
\label{sec-experiment}

\begin{table}
    \centering
    \setlength{\tabcolsep}{4pt}
    \begin{minipage}[t]{0.48\textwidth}
    \begin{tabular}{lcccc}
        \toprule
        Methods 
        & \multicolumn{2}{c}{10\% Labels} & \multicolumn{2}{c}{50\% Labels} \\
        \cmidrule(lr){2-3} \cmidrule(lr){4-5}
        & Top-1 & Top-5 & Top-1 & Top-5 \\
        \midrule
        SupCon       &0.595 &0.865 &0.625 &0.884  \\
        FixMatch     &0.588 &0.883 &0.611 &0.887 \\
        SsCL         &0.715 &0.829 &0.746 &0.887 \\
        CCSSL        &0.735 &0.841 &0.757 &0.894 \\
        SimMatch     &0.719 &0.829 &0.724 &0.866 \\
        SimMatch-V2  &0.729 &0.840 &0.729 &0.877 \\
        \bottomrule
        CLOP       &\textbf{0.743} &\textbf{0.904} &\textbf{0.76} &\textbf{0.92} \\
        \bottomrule
    \end{tabular}
    \caption{Top-1 and Top-5 accuracy on CIFAR-100.}
    \label{tab:CIFAR-classification}
    \end{minipage}
    \hfill
    \begin{minipage}[t]{0.48\textwidth}
    \centering
    \begin{tabular}{lcccc}
        \toprule
        Methods
        & \multicolumn{2}{c}{10\% Labels} & \multicolumn{2}{c}{50\% Labels} \\
        \cmidrule(lr){2-3} \cmidrule(lr){4-5}
        & Top-1 & Top-5 & Top-1 & Top-5 \\
        \midrule
        SupCon       &0.704 &0.874 &0.734 &0.830 \\
        FixMatch     &0.720 &0.886 &0.774 &0.911 \\
        SsCL         &0.721 &0.909 &0.786 &0.899 \\
        CCSSL        &0.751 &0.923 &0.771 &0.907 \\
        SimMatch     &0.740 &\textbf{0.930} &0.776 &0.904 \\
        SimMatch-V2  &0.748 &0.917 &0.788 &0.916 \\
        \bottomrule
        CLOP       &\textbf{0.791} &{0.927} &\textbf{0.829} &\textbf{0.949} \\
        \bottomrule
    \end{tabular}
    \caption{Top-1 and Top-5 accuracy on ImageNet.}
    \label{tab:ImageNet-classification}
    \end{minipage}
\end{table}

In this section, we evaluate the performance of CLOP in various learning settings. First, we compare CLOP against existing semi-supervised contrastive learning methods on image classification tasks. Furthermore, we show that CLOP consistently outperforms competitors under both balanced and imbalanced class distributions. 
Next, we highlight the generalizability of CLOP by presenting its outstanding transfer learning results on image classification and object detection tasks. Finally, we conduct extensive ablation studies on key hyperparameters, including learning rate, batch size, $\lambda$, similarity metrics, and augmentation strategies, highlighting CLOP’s robustness to varying learning rates and its effectiveness in small-batch learning. 

\begin{table}
\centering
\label{tab:accuracy_comparison}
    \begin{tabular}{lcccccccc}
    \toprule
    {Methods} & \multicolumn{2}{c}{CIFAR-100} & \multicolumn{2}{c}{ImageNet-200} & \multicolumn{2}{c}{ImageNet} \\
    \cmidrule(lr){2-3} \cmidrule(lr){4-5} \cmidrule(lr){6-7}
    & Top-1 & Top-5 & Top-1 & Top-5 & Top-1 & Top-5 \\
    \midrule
        SupCon &0.585 &0.858 &0.505 &0.749 &0.672 &0.780\\
        FixMatch &0.576 &0.840 &0.621 &0.759 &0.708 &0.860\\
        SsCL &0.719 &0.860 &0.585 &0.733 &0.730 &0.862\\
        CCSSL &0.744 &0.874 &0.631 &0.889 &0.710 &0.865\\
        SimMatch &0.685 &0.829 &0.527 &0.768 &0.705 &0.855\\
        SimMatchV2 &0.689 &0.862 &0.659 &0.846 &0.733 &0.845\\
        \bottomrule
        CLOP &\textbf{0.763} &\textbf{0.918} &\textbf{0.689} &\textbf{0.898} &\textbf{0.799} &\textbf{0.932}\\
    \bottomrule
    \end{tabular}
    \vspace{0.1cm}
\caption{Top-1 and Top-5 accuracy on CIFAR-100, ImageNet-200, and ImageNet under imbalanced-class training.}
\label{tab:Imbalanced}
\end{table}

\begin{table*}
\setlength{\tabcolsep}{4pt} 
    \centering
    \begin{tabular}{lcccccccc}
        \toprule
        Method & Food & CIFAR10 & CIFAR100 & SUN397 & DTD & Caltech-101 & Flowers \\
        \midrule
        SimCLR & 0.8820 & 0.9770 & 0.8590 & \textbf{0.6350} & 0.7320 & 0.9210 & 0.9700 \\
        SupCon & 0.8723 & 0.9742 & 0.8427 & 0.5804 & 0.7460 & 0.9104 & 0.9600 \\
        FixMatch & 0.8824 & 0.9639 & 0.8553 & 0.5774 & 0.7269 & 0.9123 & 0.9669 \\
        SsCL & 0.8546 & 0.9866 & 0.8481 & 0.5800 & 0.7300 & 0.9115 & 0.9574 \\
        CCSSL & 0.8663 & 0.9637 & 0.8352 & 0.5818 & 0.7270 & 0.9029 & 0.9456 \\
        SimMatch & \textbf{0.8881} & 0.9759 & 0.8435 & 0.6004 & 0.7306 & 0.9013 & 0.9646 \\
        SimMatch-V2 & 0.8568 & 0.9638 & 0.8270 & 0.5886 & \textbf{0.7526} & 0.9185 & 0.9613 \\
        \bottomrule
        CLOP (this paper)  & 0.8792 & \textbf{0.9989} & \textbf{0.8809} & 0.6267 & 0.7385 & \textbf{0.9331} & \textbf{0.9718} \\
        \bottomrule
    \end{tabular}
    \caption{Transfer learning results for classification tasks (pretrained on ImageNet). 
    Numbers are mean-per-class accuracy for Caltech and Flowers; 
    and top-1 accuracy for all other datasets.}
    \label{tab:classification_performance}
\end{table*}

\subsection{Semi-Supervised Image Classification}
For balanced-class training, we utilize the full CIFAR-100~\cite{krizhevsky2009learning} and ImageNet~\cite{deng2009imagenet} datasets, considering scenarios where either $10\%$ or $50\%$ of labels are available for contrastive learning. CLOP is implemented with a ResNet-50 (4x) backbone using the SimCLR loss function. 
We benchmark CLOP against several state-of-the-art semi-supervised contrastive learning methods, including SupCon~\cite{khosla2020supervised}, FixMatch~\cite{sohn2020fixmatch}, SsCL~\cite{zhang2022semi}, CCSSL~\cite{yang2022class}, SimMatch~\cite{zheng2022simmatch}, and SimMatch-V2~\cite{zheng2023simmatchv2}. The classification results for CIFAR-100 and ImageNet are presented in Table~\ref{tab:CIFAR-classification} and Table~\ref{tab:ImageNet-classification}, respectively. Our findings indicate that CLOP consistently outperforms all competing methods across all experimental settings. Notably, on CIFAR-100, CLOP achieves the highest Top-1 accuracy of 0.743 with only 10\% of labels and maintains a strong lead at 0.760 with 50\% labels, while also significantly improving Top-5 performance. On ImageNet, CLOP achieved a Top-1 accuracy of 0.791 at 10\% label availability and 0.829 at 50\%, outperforming the next best methods by margins of over 4\% in some cases. 

For imbalanced-class training, we generate class-wise sample ratios by sampling from a uniform distribution in the range of $[0.001, 1]$, while maintaining all other experimental settings identical to the balanced-class training setup. The results are presented in Table~\ref{tab:Imbalanced}, demonstrating that CLOP significantly outperforms all benchmark methods.

\begin{table}
    \centering
    \begin{tabular}{lccccc}
        \toprule
        Method & Birdsnap & Cars & Aircraft & VOC2007 & Pets \\
        \midrule
        SimCLR & 0.7590 & 0.9130 & 0.8785 & 0.8410 & 0.8920 \\
        SupCon & 0.7515 & 0.9169 & 0.8409 & 0.8517 & 0.9347 \\
        FixMatch & 0.7545 & 0.9004 & 0.8462 & 0.8372 & \textbf{0.9515} \\
        SsCL & 0.7343 & 0.9089 & 0.8341 & 0.8504 & 0.9288 \\
        CCSSL & 0.7613 & \textbf{0.9247} & 0.8528 & 0.8580 & 0.9471 \\
        SimMatch & 0.7562 & 0.9127 & 0.8269 & 0.8393 & 0.9200 \\
        SimMatchV2 & 0.7516 & 0.9186 & 0.8453 & 0.8546 & 0.9494 \\
        \bottomrule
        CLOP & \textbf{0.7794} & 0.9171 & \textbf{0.8810} & \textbf{0.8646} & 0.8982 \\
        \bottomrule
    \end{tabular}
    \vspace{0.1cm}
    \caption{Transfer learning results for objects detection task (pretrained on ImageNet). 
    Numbers are mAP for VOC2007; mean-per-class accuracy for Aircraft and Pets; 
    and top-1 accuracy for Birdsnap.}
    \label{tab:object_detection_performance}
\end{table}

\subsection{Transfer Learning on Image Classification and Object Detection}
To assess CLOP's generalization capability on unseen datasets, we conduct transfer learning experiments on both image classification and object detection tasks. Specifically, we first pretrain the ResNet-50 (4x) backbone using various contrastive loss functions on ImageNet with all labels. Subsequently, we replace the projection head with either a one-layer prediction head or a two-layer object detection head with ReLU activation and fine-tune the network on the nature image datasets, including Food~\cite{bossard2014food}, CIFAR-10~\cite{krizhevsky2010convolutional}, CIFAR-100~\cite{krizhevsky2009learning}, SUN397~\cite{xiao2010sun}, DTD~\cite{qu2023towards}, Caltech-101~\cite{fei2004learning}, Flowers~\cite{nilsback2008automated}, Birdsnap~\cite{berg2014birdsnap}, Cars~\cite{yang2015large}, Aircraft~\cite{maji2013fine}, VOC2007~\cite{everingham2007pascal}, Pets~\cite{patino2016pets}.
For image classification tasks, we report the accuracy results in Table~\ref{tab:classification_performance}. Following the standard evaluation metrics in \cite{chen2020simple, khosla2020supervised}, we present mean-per-class accuracy for Caltech and Flowers, while reporting top-1 accuracy for all other datasets. 
The results indicate that CLOP generally outperforms competing methods. Although SimMatch and SimMatch-V2 achieve slightly higher accuracy on the Food and DTD datasets, the performance gain is less than $2\%$. In contrast, CLOP surpasses these methods on the remaining four datasets, with the most significant improvement exceeding $3.5\%$ on CIFAR-100. 
For object detection tasks, we report mean average precision (mAP) for VOC2007, mean-per-class accuracy for Aircraft and Pets, and top-1 accuracy for Birdsnap. The results, presented in Table~\ref{tab:object_detection_performance}, demonstrate that CLOP outperforms competing methods on three out of five datasets.

\subsection{Ablation Studies}
\label{subsec-ablation}
In this section, we present the experimental results for image classification, conducted with various batch sizes and learning rates on the CIFAR-100 and ImageNet datasets. For baseline methods, we implement the InfoNCE \cite{wu2018unsupervised} with a supervised linear classifier for semi-supervised learning and the SupCon \cite{khosla2020supervised} for fully-supervised learning. All experiments are performed using the SimCLR \cite{chen2020simple} framework with ResNet-50 (4x) \cite{he2016deep}.

\textbf{Effect of Prototype Initialization.} To assess the impact of prototype initialization in our framework, we conduct an ablation study comparing non-orthonormal prototypes with orthonormal prototypes on CIFAR-100 under full supervision. As shown in Table~\ref{tab:ablation_prototypes}, initializing prototypes with orthonormal vectors consistently yields better performance across all statistical measures. Specifically, orthonormal initialization improves the mean top-1 accuracy from 0.768 to 0.780 and reduces the standard deviation from 0.015 to 0.009, indicating both improved performance and training stability.

\textbf{Orthonormal (CLOP) vs ETF prototypes.} To ensure a fair comparison with ETF, we also evaluate performance using ETF prototypes as an alternative to the orthonormal prototypes. For fully-supervised learning, we utilize all labels in the training datasets for both SupCon and CLOP. 
In the semi-supervised setting, we employ 10\% of the labeled data for both linear classifier and CLOP training.
We report both top-1 classification accuracy on ImageNet in Figure \ref{fig:exp-lr-bs} and accuracies on CIFAR100 and Tiny-ImageNet in Appendix \ref{sec-app-exp} using the supervised linear classifier.

\begin{table}
    \centering
    \begin{tabular}{lcc}
        \toprule
        \textbf{Statistic} & \textbf{Non-Orthonormal Prototypes} & \textbf{Orthonormal Prototypes} \\
        \midrule
        Mean              & 0.768 & \textbf{0.780} \\
        Median            & 0.762 & \textbf{0.781} \\
        Std               & 0.015 & \textbf{0.009} \\
        Lower Quantile    & 0.760 & \textbf{0.775} \\
        Upper Quantile    & 0.775 & \textbf{0.786} \\
        \bottomrule
    \end{tabular}
    \caption{Ablation study on the effect of prototype initialization. We compare non-orthonormal and orthonormal initialization over 10 independent runs on CIFAR-100 under the full label setting. Results report the distribution statistics of Top-1 classification accuracy.}
    \label{tab:ablation_prototypes}
\end{table}

\begin{figure*}
    \centering
    \includegraphics[height=0.197\linewidth, clip, trim = {0cm 0cm 0cm 0cm}]{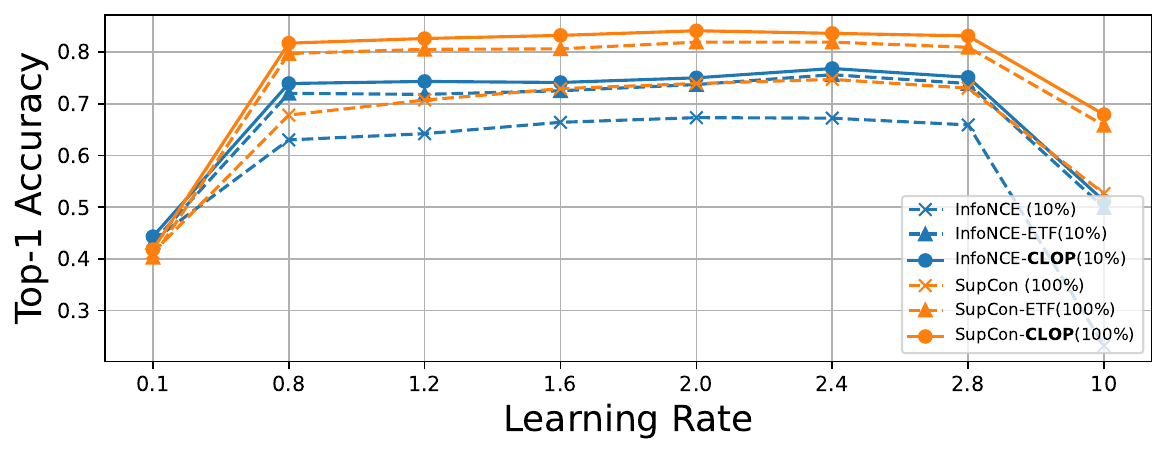}
    \includegraphics[height=0.197\linewidth, clip, trim = {1cm 0cm 0cm 0cm}]{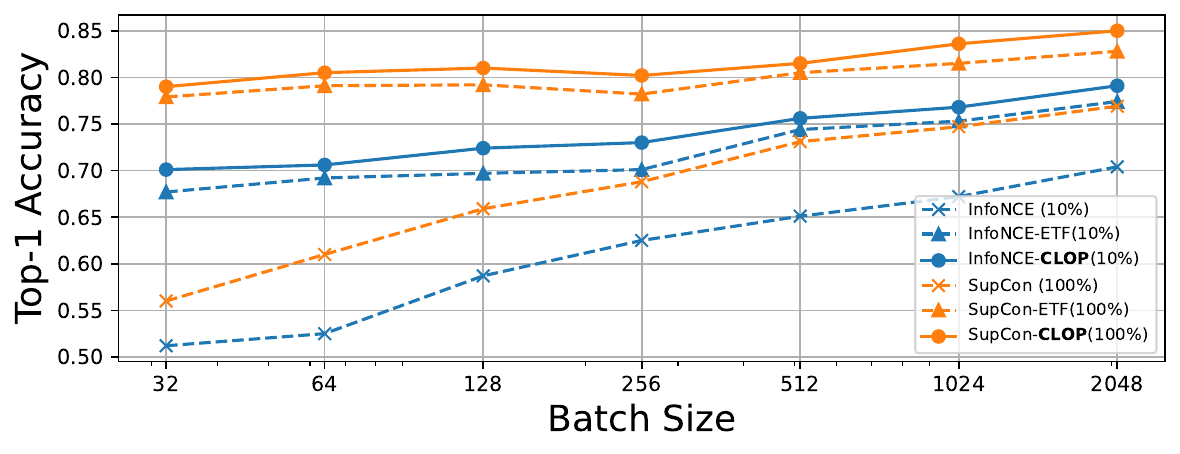}
    \caption{Top-1 classification accuracy on ImageNet across different learning rates and batch sizes. The percentage of labels used for supervised training is indicated in the legend.}
    \label{fig:exp-lr-bs}
\end{figure*}

\begin{table*}
\caption{Ablation studies on $\lambda$, similarity metric for CLOP, and augmentation strategies.}
\setlength{\tabcolsep}{5pt} 
\label{tab-ablation}
\vspace{-0.3cm}
\centering
\subtable[Acc of different $\lambda$.]{
\begin{tabular}{ccc}
\toprule
{$\lambda$} & {Top-1} & {Top-5}  \\ \midrule
0.1 & 0.745 & 0.935 \\ 
0.5 & 0.740 & 0.931 \\ 
1.0 & 0.754 & \textbf{0.938}\\ 
1.5 & \textbf{0.760} & 0.937 \\ \bottomrule
\end{tabular}
\label{ablation-lambda}
}
\subtable[Acc of different similarity metric.]{
\label{tab:similarity_ablation}
\begin{tabular}{lcc}
\toprule
{Similarity Metric} &{Top-1} &{Top-5}\\ \midrule
Cosine &\textbf{0.754} &\textbf{0.938}\\ 
Euclidean &0.749 &0.933\\ 
Manhattan &0.715 &0.899\\ \bottomrule
\end{tabular}
}
\subtable[Acc of augmentation strategies.]{
\begin{tabular}{lcc}
    \toprule
    Augmentation & Top-1 & Top-5 \\
    \midrule
    AutoAugment &0.625 &0.847 \\
    SimCLR &\textbf{0.754} &\textbf{0.938}\\
    RandAugment &0.726 &0.899 \\
    \bottomrule
\end{tabular}
\label{tab:aug_ablation}
}
\vspace{-0.2cm}
\end{table*}

\textbf{CLOP Prevents Collapse with Large Learning Rates.} We trained models with learning rates ranging from $0.1$ to $10$ on CIFAR-100 and ImageNet for 200 epochs and Tiny-ImageNet for 100 epochs, using a batch size of 1024. The corresponding classification accuracies are presented in Figure~\ref{fig:exp-lr-bs} and \ref{appfig:exp-lr}. Across both datasets, CLOP consistently outperforms the baseline methods. Moreover, as demonstrated by Section~\ref{sec-theory}, excessively large learning rates can lead to complete collapse, as clearly observed in the baseline methods at a learning rate of $10$ on both datasets. However, with the incorporation of CLOP into the loss function, we observe a significantly smaller performance degradation on both datasets.

\textbf{CLOP Enables Smaller Batch Sizes.} We trained models with batch sizes of 32, 64, 128, 256, 512, 1024, and 2048 on CIFAR-100 and ImageNet for 200 epochs and on Tiny-ImageNet for 100 epochs. The learning rate was fixed at $\left(0.3 \times \text{batch size} / 256\right)$ for optimal performance. The corresponding classification accuracies are presented in Figure~\ref{fig:exp-lr-bs} and \ref{appfig:exp-batchsize}. CLOP consistently outperformed the baseline methods across all batch sizes. 
As reported in the original papers \cite{chen2020simple,khosla2020supervised}, contrastive learning performs optimally when the batch size exceeds 1024, a finding corroborated by our experiments. However, with the addition of CLOP, we observe significantly less performance degradation at smaller batch sizes. Remarkably, CLOP achieved similar accuracy with a batch size of 32 compared to the baseline SupCon with a batch size of 2048 for CIFAR-100.

\textbf{Tuning $\lambda$.} To evaluate the sensitivity of the tuning parameter $\lambda$ in CLOP, we trained the model with SupCon loss across different $\lambda$ values, keeping the batch size fixed at 1024. The classification accuracy on both CIFAR-100 is reported in Table~\ref{ablation-lambda}. We observe that the performance remains stable for $\lambda$ values ranging from $0.1$ to $1.5$, with $\lambda = 1.0$ and $\lambda = 1.5$ yielding the best overall performance.

\textbf{Choice of Similarity Metric.} To evaluate the impact of different similarity functions on \eqref{CLOP_loss}, we trained the same ResNet-50 (4x) architecture on CIFAR-100 using cosine similarity, Euclidean similarity, and Manhattan similarity. The results, presented in Table \ref{tab:similarity_ablation}, indicate that cosine similarity, which aligns with $\mathcal{L}_{CL}$ in \eqref{CLOP_loss}, achieves the highest performance.

\textbf{Augmentation Strategies.}
To evaluate the impact of augmentation strategies on CLOP, we trained the same ResNet-50 (4x) model on CIFAR-100 with a batch size of 1024. We selected three commonly used augmentation methods:
1) RandAugment: Augmentation with three operations randomly chosen from all image processing functions in PyTorch (e.g., padding, resizing, cropping, rotation, color jitter, Gaussian blur, inversion, contrast adjustment, equalization); 2) AutoAugment using the ImageNet policy proposed in \cite{cubuk2018autoaugment}; 3) SimCLR Augmentation Policy from \cite{chen2020simple}. The results are shown in Table \ref{tab:aug_ablation}, indicate that SimCLR augmentation works best with CLOP.

\section{Conclusion and Limitations}
\label{sec-conclusion}
In this paper, we addressed the issue of dimensional collapse in contrastive learning by analyzing the impact of learning rates on representation stability. We identified a critical threshold beyond which standard contrastive objectives, particularly those based on cosine similarity, converge to degenerate low-rank solutions. Motivated by this observation, we proposed \textbf{CLOP}, a novel semi-supervised loss function that encourages embeddings to align with a set of orthonormal prototypes. By explicitly promoting the formation of class-specific orthogonal subspaces, CLOP enhances the expressiveness and separability of learned representations.
Empirical results across a range of classification and object detection benchmarks validate the effectiveness of CLOP. The method consistently outperforms strong baselines under both balanced and imbalanced label regimes, and demonstrates robustness across varying learning rates and batch sizes.

Despite these promising results, CLOP assumes a fixed number of well-separated classes and relies on static prototype initialization, which may limit its applicability in fine-grained or evolving-label scenarios. Future work could explore adaptive prototype mechanisms and extend the approach to tasks involving hierarchical or multi-label structures.

\bibliography{ref.bib}
\bibliographystyle{plain}  
\clearpage
\appendix

\section{Proof of Lemma~\ref{subspace-embeddings}}
\label{proof1}

\begin{proof}[Proof of Lemma \ref{subspace-embeddings}]
Consider \(\L_i\) as the i-th loss term of \(\L_{\text{InfoNCE}}\), defined by the following expression:
\begin{align*}
 \L_i := - \log  \mathbb{P}_i
\end{align*}
where $\mathbb{P}_i$ denotes the probability that i-th embedding choose its positive pair as closest neighbor:
\begin{align*}
    \mathbb{P}_i := \frac{\exp(\z_i^\top \z_{j(i)}/\tau)}{\exp(\z_i^\top \z_{j(i)}/\tau) + \sum_{a \notin \{i, j(i)\}} \exp(\z_i^\top \z_a/\tau)}
\end{align*}

As detailed in \cite{yeh2022decoupled}, the gradient of \(\L_i\) with respect to \(\z_i\), \(\z_{j(i)}\), and \(\z_a\) can be derived as follows:
\begin{align*}
 &-\frac{\partial \L_i}{\partial\z_i} := (1- \mathbb{P}_i)/\tau \left( \z_{j(i)} -\sum_{a \notin \{i, j(i)\}} \frac{\exp(\z_i^\top \z_a/\tau)}{\sum_{b \notin \{i, j(i)\}}  \exp(\z_i^\top \z_b/\tau)} \z_a \right) \\
 &-\frac{\partial \L_i}{\partial\z_{j(i)}} := \frac{(1- \mathbb{P}_i)}{\tau} \z_{i} \\
 &-\frac{\partial \L_i}{\partial\z_{a}} := -\frac{(1-\mathbb{P}_i)}{\tau} \frac{\exp(\z_i^\top \z_a/\tau)}{\sum_{b \notin \{i, j(i)\}}  \exp(\z_i^\top \z_b/\tau)}\z_{i} 
\end{align*}

In the standard setup of self-supervised learning, for any sample, there is one positive pair among \(I\) and the remainder are all negative pairs. By aggregating all the gradient respect to a single sample, we have the gradient of InfoNCE respect to $\z_i$:
\begin{align*}
 -\frac{\partial \L_{\text{InfoNCE}}}{\partial\z_i} := \frac{(1-\mathbb{P}_i)+ (1-\mathbb{P}_{j(i)})}{\tau}  \z_{j(i)} -&\sum_{a \notin \{i, j(i)\}} \frac{(1-\mathbb{P}_i)}{\tau} \frac{\exp(\z_i^\top \z_a/\tau)}{\sum_{b \notin \{i, j(i)\}}  \exp(\z_i^\top \z_b/\tau)} \z_a  \\
 & -\sum_{a \notin \{i, j(i)\}} \frac{(1-\mathbb{P}_a)}{\tau} \frac{\exp(\z_i^\top \z_a/\tau)}{\sum_{b \notin \{a, j(a)\}}  \exp(\z_a^\top \z_b/\tau)}\z_a 
\end{align*}

Now, considering the first scenario, where all embeddings equal, that means that \(\z_i = \z_{j(i)} = \z_a = \z^*\) for all \(a \in I\), the loss terms \(\mathbb{P}_i\), \(\mathbb{P}_{j(i)}\), and \(\mathbb{P}_a\) converge to a constant \(\mathbb{P}^*\), given by:
\begin{align*}
    \mathbb{P}_i = \mathbb{P}_{j(i)} = \mathbb{P}_a = -\log \frac{1}{|I| - 1} := \mathbb{P}^*
\end{align*}
Consequently, the gradient of \(\L_{\text{InfoNCE}}\) with respect to \(\z_i\) under this assumption reduces to zero, aligning with our expectations:
\begin{align*}
 -\frac{\partial \L_{\text{InfoNCE}}}{\partial \z_i} &= \frac{2(1-\mathbb{P}^*)}{\tau} \z^* - 2(|I|-2) \frac{(1-\mathbb{P}^*)}{\tau} \frac{1}{|I| - 2} \z^* = 0
 \end{align*}

We establish the existence of local minima in scenarios where all embeddings are identical. Now, we consider the second scenario where all embeddings generated reside within
 the same rank-1 subspace. Denoting \(\z^*\) as their unit basis, we can represent each embedding \(\z_i\) as:
\begin{equation*}
    \z_i = \alpha \z^*, \quad \alpha \in \{-1, 1\}, \quad \forall i
\end{equation*}
The gradient of the loss function \(\L_{\text{InfoNCE}}\) with respect to \(\z_i\) simplifies to:
\begin{align*}
     -\frac{\partial \L}{\partial \z_i} = \beta \z_i
\end{align*}
Here, \(\beta\) is a scalar that aggregates contributions from all relevant weights.

It is important to note that \(\z^i\) represents the normalized output of the function \(f\), with \(\mathbf{x}_i\) denoting the original, unnormalized embedding. This implies the following relation:
\begin{align*}
    -\frac{\partial \L}{\partial \mathbf{x}_i} &=  -\frac{\partial \L}{\partial \z_i} \frac{\partial \z_i}{\partial \mathbf{x}_i} = \frac{1}{\|\x_i\|_2}\left( \mathbb{I} - \frac{\x_i\x_i^\top}{\|\x_i\|_2^2} \right)\beta \z_i  = \frac{\beta}{\|\x_i\|_2}\left( \z_i - \frac{\x_i}{\|\x_i\|_2} \right)  = 0,
\end{align*}
where \(\mathbb{I}\) represents the identity matrix.

\end{proof}



\section{Gradient Analysis of Repulsive Force from InfoNCE Loss}
\label{proof2}
Recall that
\begin{equation*}
    \mathcal{L}_{\text{infoNCE}} = - \sum_{i \in I} \log \frac{\exp(\mathbf{z}_i^\top \mathbf{z}_{j(i)}/\tau)}{\sum_{a \neq i} \exp(\mathbf{z}_i^\top \mathbf{z}_a/\tau)} 
\end{equation*}

Assume that the model can successfully merge the positive pair into the same embedding, the loss of repulsive becomes:
\begin{equation*}
    \mathcal{L}_{\text{repulsive}} = - \sum_{i \in I} \log \frac{\exp(1/\tau)}{\sum_{a \neq i, j(i)} \exp(\mathbf{z}_i^\top \mathbf{z}_a/\tau) + \exp(1/\tau)} 
\end{equation*}

We start by rewriting the loss for a given sample i:
$$\ell_i = -\log \frac{\exp(1/\tau)}{\exp(1/\tau) + \sum_{a \in \mathcal{N}(i)} \exp\Bigl(\mathbf{z}_i^\top \mathbf{z}_a/\tau\Bigr)},$$
where we denote
$$\mathcal{N}(i) := \{\,a: a\neq i \text{ and } a\neq j(i)\,\}.$$

Define the denominator
$$D_i := \exp(1/\tau) + \sum_{a\in \mathcal{N}(i)} \exp\Bigl(\mathbf{z}_i^\top \mathbf{z}_a/\tau\Bigr).$$
Then
$$\ell_i = \log\Bigl[D_i\Bigr] - \frac{1}{\tau}.$$

Define $ \mathbb{P}_{ia}$ as the probability that sample $i$ choose sample a as closest neighbor,
$$ \mathbb{P}_{ia} := \frac{\exp\Bigl(\mathbf{z}_i^\top\mathbf{z}_a/\tau\Bigr)}{\exp(1/\tau) + \sum_{a\in \mathcal{N}(i)} \exp\Bigl(\mathbf{z}_i^\top \mathbf{z}_a/\tau\Bigr)} = \frac{\exp\Bigl(\mathbf{z}_i^\top\mathbf{z}_a/\tau\Bigr)}{D_i} $$

\textbf{Taking the gradient of $\ell_i$ with respect to $\mathbf{z}_i$,}
$$\frac{\partial}{\partial \mathbf{z}_i}\exp\Bigl(\mathbf{z}_i^\top\mathbf{z}_a/\tau\Bigr)
=\frac{1}{\tau}\exp\Bigl(\mathbf{z}_i^\top\mathbf{z}_a/\tau\Bigr)\mathbf{z}_a.$$
Thus, differentiating the log term gives:
$$\frac{\partial \ell_i}{\partial \mathbf{z}_i}
=\frac{1}{D_i}\sum_{a\in\mathcal{N}(i)} \frac{1}{\tau}\exp\Bigl(\mathbf{z}_i^\top\mathbf{z}_a/\tau\Bigr)\mathbf{z}_a.$$
Rewriting in terms of the softmax probabilities,
$$\frac{\partial \ell_i}{\partial \mathbf{z}_i}
=\frac{1}{\tau}\sum_{a\in\mathcal{N}(i)} \mathbb{P}_{ia}\,\mathbf{z}_a,$$

\textbf{Taking the gradient of $\ell_i$ with respect to $\mathbf{z}_a$,}
$$\frac{\partial}{\partial \mathbf{z}_a}\exp\Bigl(\mathbf{z}_i^\top\mathbf{z}_a/\tau\Bigr)
=\frac{1}{\tau}\exp\Bigl(\mathbf{z}_i^\top\mathbf{z}_a/\tau\Bigr)\mathbf{z}_i.$$
Thus, differentiating the log term gives:
$$\frac{\partial \ell_i}{\partial \mathbf{z}_a}
=\frac{1}{D_i}\frac{1}{\tau}\exp\Bigl(\mathbf{z}_i^\top\mathbf{z}_a/\tau\Bigr)\mathbf{z}_i.$$
Rewriting in terms of the softmax probabilities,
$$\frac{\partial \ell_i}{\partial \mathbf{z}_a}
=\frac{1}{\tau} \mathbb{P}_{ia}\,\mathbf{z}_i.$$

Therefore, the gradient of $\mathcal{L}_{\text{repulsive}}$ is:
$$\frac{\partial \mathcal{L}_{\text{repulsive}}}{\partial \mathbf{z}_i} = \frac{\partial \ell_i}{\partial \mathbf{z}_i} + \sum_{a \in \mathcal{N}(i)}\frac{\partial \ell_a}{\partial \mathbf{z}_i} = \frac{1}{\tau}\sum_{a\in\mathcal{N}(i)} \mathbb{P}_{ia}\,\mathbf{z}_a + \frac{1}{\tau}\sum_{a \in \mathcal{N}(i)}\mathbb{P}_{ai}\,\mathbf{z}_a =  \frac{1}{\tau}\sum_{a\in\mathcal{N}(i)} (\mathbb{P}_{ia} + \mathbb{P}_{ai})\,\mathbf{z}_a$$

Since the probability matrix $\mathbb{P}$ is obviously symmetric,
$$\frac{\partial \mathcal{L}_{\text{repulsive}}}{\partial \mathbf{z}_i} =  \frac{2}{\tau}\sum_{a\in\mathcal{N}(i)} \mathbb{P}_{ia}\mathbf{z}_a$$

Thus, for the gradient of the before the normalization,
\begin{align*}
    \frac{\partial \mathcal{L}}{\partial \mathbf{x}_i} &=  \frac{\partial \mathcal{L}}{\partial \mathbf{z}_i} \frac{\partial \mathbf{z}_i}{\partial \mathbf{x}_i} = \frac{1}{\|\mathbf{x}_i\|_2}\left( \mathbb{I} - \frac{\mathbf{x}_i(\mathbf{x}_i)^\top}{\|\mathbf{x}_i\|_2^2} \right)\frac{2}{\tau}\sum_{a\in\mathcal{N}(i)} \mathbb{P}_{ia}\mathbf{z}_a = \frac{1}{\|\mathbf{x}_i\|_2}\left( \mathbb{I} - {\mathbf{z}_i(\mathbf{z}_i)^\top} \right)\frac{2}{\tau}\sum_{a\in\mathcal{N}(i)} \mathbb{P}_{ia}\mathbf{z}_a 
\end{align*}

Given the first iteration of points as $\{\mathbf{x}^{0}_i\}$, then, after the first iteration of descent, the next embeddings are:
$$\mathbf{x}_i^1 = \mathbf{x}^{0}_i - \eta \frac{1}{\|\mathbf{x}_i^0\|_2}\left( \mathbb{I} - {\mathbf{z}^0_i(\mathbf{z}_i^0)^\top} \right)\frac{2}{\tau}\sum_{a\in\mathcal{N}(i)} \mathbb{P}_{ia}\,\mathbf{z}^0_a$$

Thus, the mean of $\mathbf{x}_i^1$ can be calculated as:
$$\bm{\mu}^1 = \frac{1}{N}\sum_{i=1}^N \mathbf{x}_i^1 = \frac{1}{N}\sum_{i=1}^N  \mathbf{x}^{0}_i - \frac{2\eta}{N\tau}\sum_{i=1}^N \sum_{a\in\mathcal{N}(i)} \frac{1}{\|\mathbf{x}_i^0\|_2}\mathbb{P}_{ia} \left( \mathbb{I} - {\mathbf{z}_i^0(\mathbf{z}_i^0)^\top} \right)\mathbf{z}^0_a$$


Since the negative pair relationship is symmetric, ie. $a \in \mathcal{N}(i) \Longleftrightarrow i \in \mathcal{N}(a)$, then
\begin{align*}
    \sum_{i=1}^N \sum_{a\in\mathcal{N}(i)} \mathbb{P}_{ia}\frac{1}{\|\mathbf{x}_i^0\|_2}\left( \mathbb{I} - {\mathbf{z}_i^0(\mathbf{z}_i^0)^\top} \right)\mathbf{z}^0_a =
    \sum_{a=1}^N \sum_{i\in\mathcal{N}(a)} \mathbb{P}_{ia}\frac{1}{\|\mathbf{x}_i^0\|_2}\left( \mathbb{I} - {\mathbf{z}_i^0(\mathbf{z}_i^0)^\top} \right)\mathbf{z}^0_a
\end{align*}

Thus,
\begin{align*}
    \bm{\mu}^1 &= \frac{1}{N}\sum_{i=1}^N  \mathbf{x}^{0}_i - \frac{2\eta}{N\tau}\sum_{a=1}^N \sum_{i\in\mathcal{N}(a)} \mathbb{P}_{ia}\frac{1}{\|\mathbf{x}_i^0\|_2}\left( \mathbb{I} - {\mathbf{z}_i^0(\mathbf{z}_i^0)^\top} \right)\mathbf{z}^0_a \\
    &= \frac{1}{N}\sum_{i=1}^N  \mathbf{x}^{0}_i - \frac{2\eta}{N\tau}\sum_{i=1}^N \sum_{a\in\mathcal{N}(i)} \frac{\mathbb{P}_{ia}}{\|\mathbf{x}_a^0\|_2}\left( \mathbb{I} - {\z_a^0(\z_a^0)^\top} \right)\mathbf{z}^0_i \\
    &= \frac{1}{N}\sum_{i=1}^N  \mathbf{x}^{0}_i - \frac{2\eta}{N\tau}\sum_{i=1}^N \sum_{a\in\mathcal{N}(i)} \frac{\mathbb{P}_{ia}}{\|\mathbf{x}_a^0\|_2}\mathbf{z}^0_i + \frac{2\eta}{N\tau}\sum_{i=1}^N \sum_{a\in\mathcal{N}(i)} \frac{\mathbb{P}_{ia}}{\|\mathbf{x}_a^0\|_2}\z_a^0(\z_a^0)^\top \mathbf{z}^0_i\\
    &= \frac{1}{N}\sum_{i=1}^N  \mathbf{x}^{0}_i - \frac{2\eta}{N\tau}\sum_{i=1}^N \sum_{a\in\mathcal{N}(i)} \frac{\mathbb{P}_{ia}}{\|\mathbf{x}_a^0\|_2}\mathbf{z}^0_i + \frac{2\eta}{N\tau}\sum_{i=1}^N \sum_{a\in\mathcal{N}(i)} \frac{\mathbb{P}_{ia}}{\|\mathbf{x}_i^0\|_2}\z_i^0(\z_i^0)^\top \mathbf{z}^0_a\\
    &= \frac{1}{N}\sum_{i=1}^N  \mathbf{x}^{0}_i - \frac{2\eta}{N\tau}\sum_{i=1}^N \sum_{a\in\mathcal{N}(i)} \mathbb{P}_{ia}\frac{1}{\|\mathbf{x}_a^0\|_2}\mathbf{z}^0_i + \frac{2\eta}{N\tau}\sum_{i=1}^N \sum_{a\in\mathcal{N}(i)} \mathbb{P}_{ia}\frac{(\z_i^0)^\top \mathbf{z}^0_a}{\|\mathbf{x}_i^0\|_2}\z_i^0\\    
    &= \frac{1}{N}\sum_{i=1}^N  \mathbf{x}^{0}_i - \frac{2\eta}{N\tau}\sum_{i=1}^N \sum_{a\in\mathcal{N}(i)} \mathbb{P}_{ia}\left(\frac{1}{\|\mathbf{x}_a^0\|_2} - \frac{(\z_i^0)^\top \mathbf{z}^0_a}{\|\mathbf{x}_i^0\|_2}\right)\frac{1}{\|\mathbf{x}^0_i\|_2}\mathbf{x}^0_i\\    
    &= \frac{1}{N}\sum_{i=1}^N  \left[ \left(1 - \frac{2\eta}{\tau}\right) \frac{1}{\|\mathbf{x}^0_i\|_2}\sum_{a\in\mathcal{N}(i)}  \mathbb{P}_{ia}\left(\frac{1}{\|\mathbf{x}_a^0\|_2} - \frac{(\z_i^0)^\top \mathbf{z}^0_a}{\|\mathbf{x}_i^0\|_2}\right)\right]\mathbf{x}^0_i\\ 
    &=  \left(1 - \frac{2\eta}{\tau}\right) \frac{1}{N}\sum_{i=1}^N  \left[\frac{1}{\|\mathbf{x}^0_i\|_2}\sum_{a\in\mathcal{N}(i)}  \mathbb{P}_{ia}\left(\frac{1}{\|\mathbf{x}_a^0\|_2} - \frac{(\z_i^0)^\top \mathbf{z}^0_a}{\|\mathbf{x}_i^0\|_2}\right)\right]\mathbf{x}^0_i\\
        \end{align*}
    \begin{align*}
\end{align*}

Since $-1 \leq \z_i^\top \z_a \leq 1$, Denote
\begin{align*}
    \beta_{ia} &:= \left[\frac{1}{\|\mathbf{x}^0_i\|_2}\sum_{a\in\mathcal{N}(i)}  \mathbb{P}_{ia}\left(\frac{1}{\|\mathbf{x}_a^0\|_2} - \frac{(\z_i^0)^\top \mathbf{z}^0_a}{\|\mathbf{x}_i^0\|_2}\right)\right]\\
    &\leq \frac{1}{\|\mathbf{x}^0_i\|_2}\sum_{a\in\mathcal{N}(i)}  \mathbb{P}_{ia}\left(\frac{1}{\|\mathbf{x}_a^0\|_2} + \frac{1}{\|\mathbf{x}_i^0\|_2}\right)\\
    &\leq \frac{2}{\min_j\|\mathbf{x}^0_j\|^2_2}\sum_{a\in\mathcal{N}(i)}  \mathbb{P}_{ia}\\
\end{align*}

And we can lowerbound sum of probability by
$$
   \sum_{a\in\mathcal{N}(i)} \mathbb{P}_{ia} 
   =
   \frac{\sum_{a\in\mathcal{N}(i)}\exp\Bigl((\mathbf{z}^{0}_i)^\top(\mathbf{z}_a^0)/\tau\Bigr)}{\exp(1/\tau) + \sum_{a\in \mathcal{N}(i)} \exp\Bigl((\mathbf{z}^{0}_i)^\top(\mathbf{z}_a^0)/\tau\Bigr)} \leq 
   \frac{{{|\mathcal{N}|}\exp(1/\tau)}}{\exp(1/\tau) + |\mathcal{N}| \exp(1/\tau)}  
   =
   \frac{{{|\mathcal{N}|}}}{1 + |\mathcal{N}| }  
$$

Thus, denote $\sigma := \min_j\|\mathbf{x}^0_j\|_2$, we show that
$$\beta_{ia} \leq \frac{2}{\sigma^2} \frac{{{|\mathcal{N}|}}}{1 + |\mathcal{N}| }  $$

Therefore,
$$\|\bm{\mu}_1\|_2 \leq \left| \left(1 - \frac{2\eta}{\tau}\right)\frac{2}{\sigma^2} \frac{{{|\mathcal{N}|}}}{1 + |\mathcal{N}| }\right| \|\bm{\mu}_0\|_2$$

If we want to limit the increase of mean, we want the coefficient to be less than 1, which means that

$$-\frac{\sigma^2 (1 + |\mathcal{N}|)}{2|\mathcal{N}|} < 1 - \frac{2\eta}{\tau} < \frac{\sigma^2 (1 + |\mathcal{N}|)}{2|\mathcal{N}|}$$
$$-1 - \frac{\sigma^2 (1 + |\mathcal{N}|)}{2|\mathcal{N}|} < - \frac{2\eta}{\tau} < -1 + \frac{\sigma^2 (1 + |\mathcal{N}|)}{2|\mathcal{N}|}$$
$$1 + \frac{\sigma^2 (1 + |\mathcal{N}|)}{2|\mathcal{N}|} > \frac{2\eta}{\tau} > 1 - \frac{\sigma^2 (1 + |\mathcal{N}|)}{2|\mathcal{N}|}$$
$$\frac{\tau}{2} \left(1 + \frac{\sigma^2 (1 + |\mathcal{N}|)}{2|\mathcal{N}|} \right) > \eta > \frac{\tau}{2} \left(1 - \frac{\sigma^2 (1 + |\mathcal{N}|)}{2|\mathcal{N}|} \right)$$

\section{Extra Experiments on CLOP}
\label{sec-app-exp}

\begin{figure}
    \centering
    \includegraphics[height=0.2\linewidth,clip, trim = {0 1cm 0cm 0cm}]{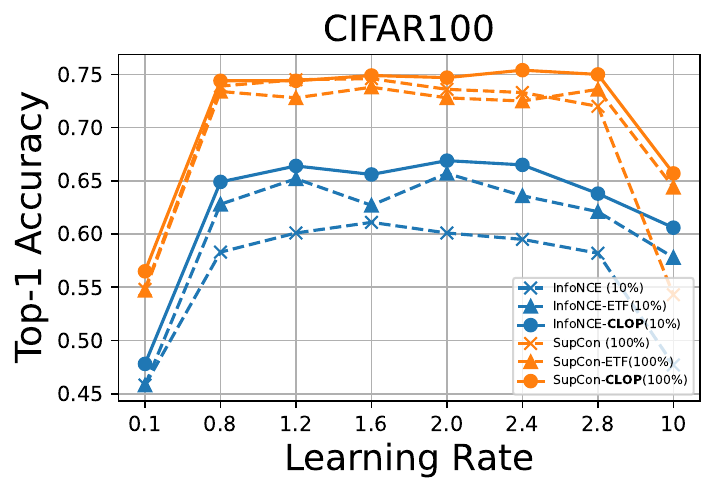}
    \includegraphics[height=0.2\linewidth, clip, trim = {1cm 1cm 0cm 0cm}]{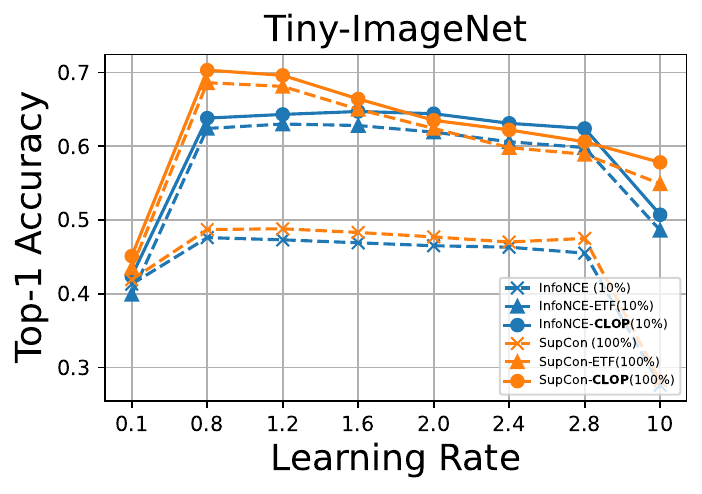}
    \includegraphics[height=0.2\linewidth, clip, trim = {1cm 1cm 0cm 0cm}]{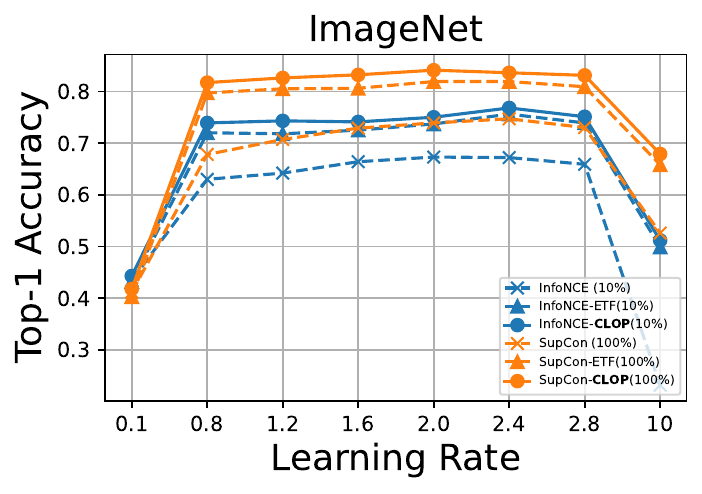}
    \includegraphics[height=0.2\linewidth, clip, trim = {0cm 0cm 0cm 0.85cm}]{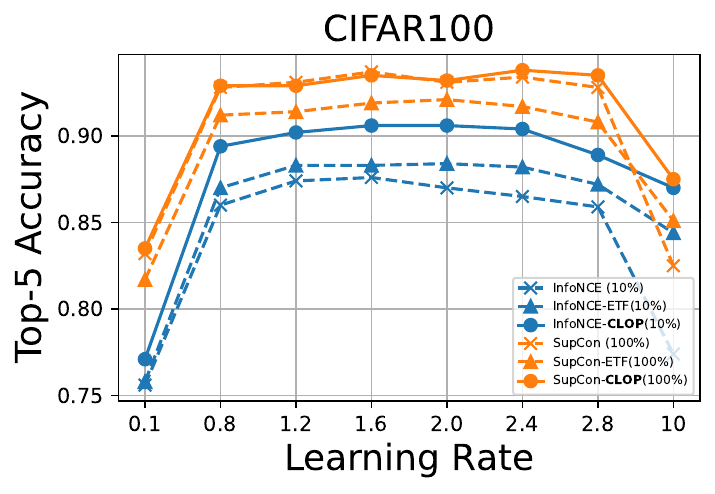}
    \includegraphics[height=0.2\linewidth, clip, trim = {1cm 0cm 0cm 0.85cm}]{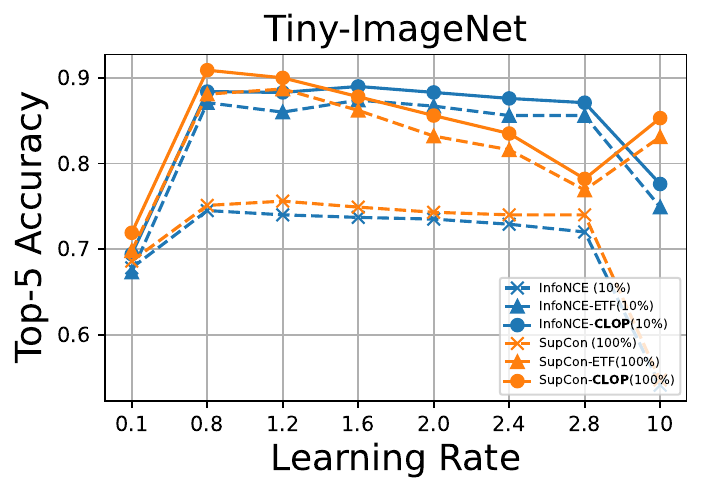}
    \includegraphics[height=0.2\linewidth, clip, trim = {1cm 0cm 0cm 0.85cm}]{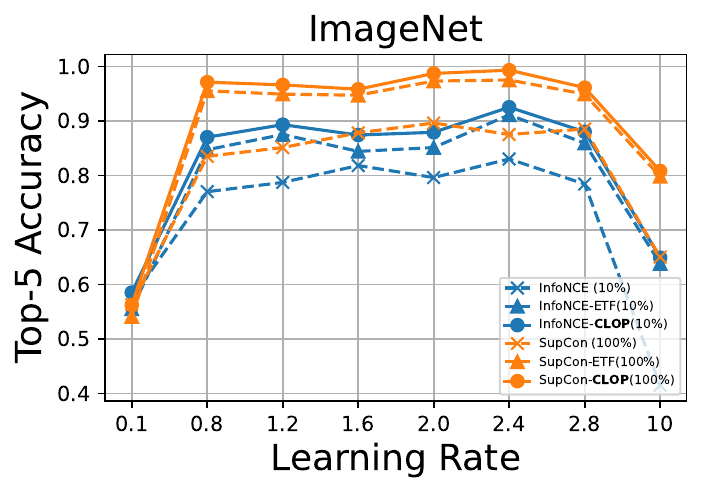}
    \caption{Top-1 classification accuracy across different learning rates. The percentage of labels used for supervised training is indicated in the legend.}
    \vspace{-0.2cm}
    \label{appfig:exp-lr}
\end{figure}

\setlength{\columnsep}{10pt} 
\begin{figure}
    \centering
    \includegraphics[height=0.2\linewidth,clip, trim = {0 1cm 0cm 0cm}]{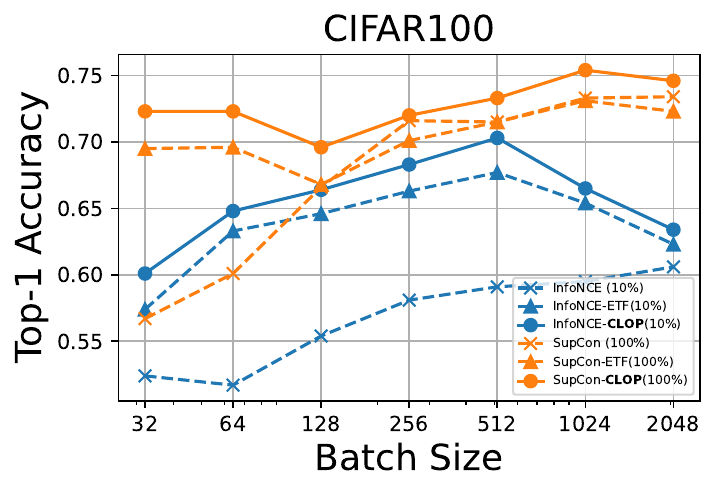}
    \includegraphics[height=0.2\linewidth, clip, trim = {1cm 1cm 0cm 0cm}]{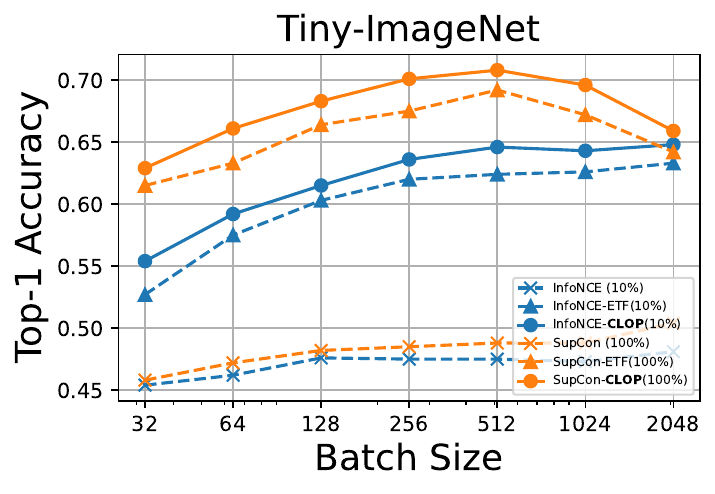}
    \includegraphics[height=0.2\linewidth, clip, trim = {1cm 1cm 0cm 0cm}]{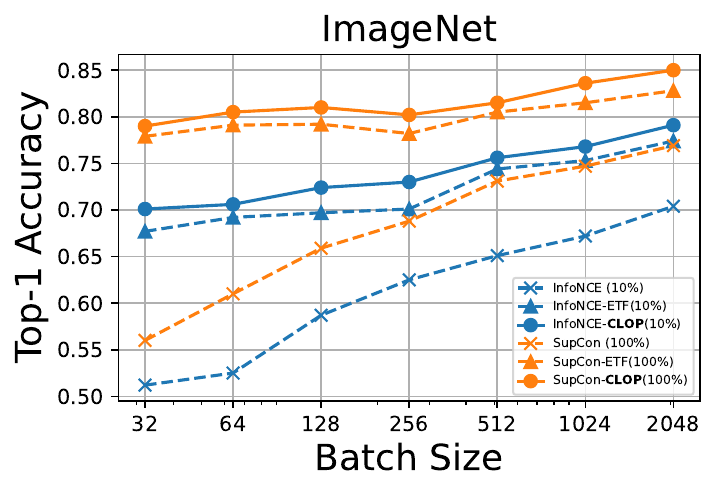}
    \includegraphics[height=0.2\linewidth, clip, trim = {0cm 0cm 0cm 0.85cm}]{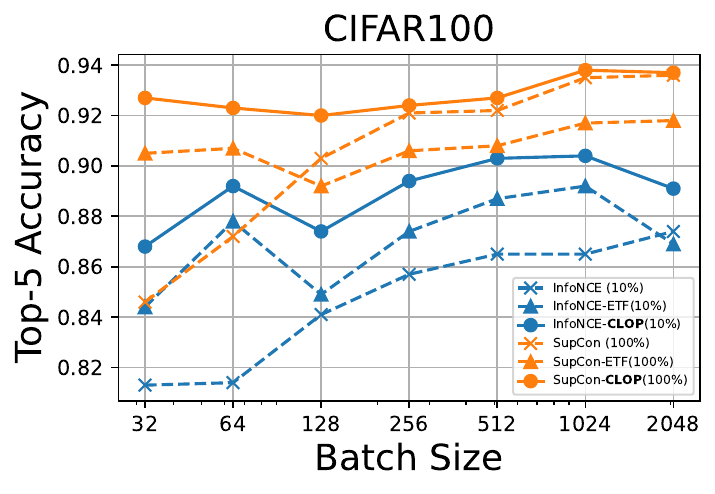}
    \includegraphics[height=0.2\linewidth, clip, trim = {1cm 0cm 0cm 0.85cm}]{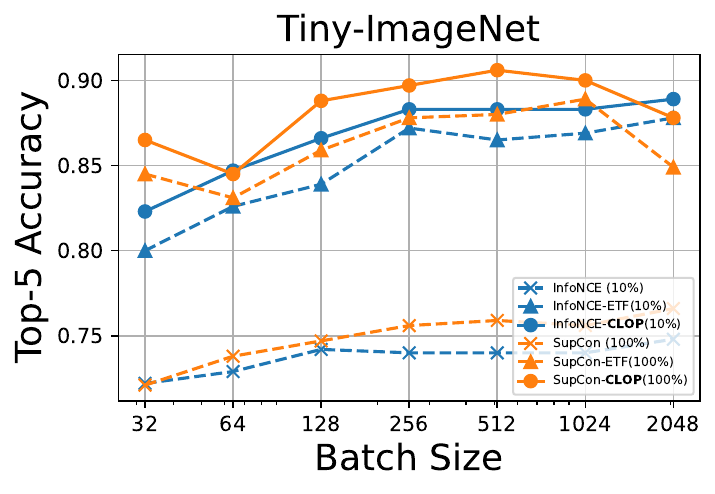}
     \includegraphics[height=0.2\linewidth, clip, trim = {1cm 0cm 0cm 0.85cm}]{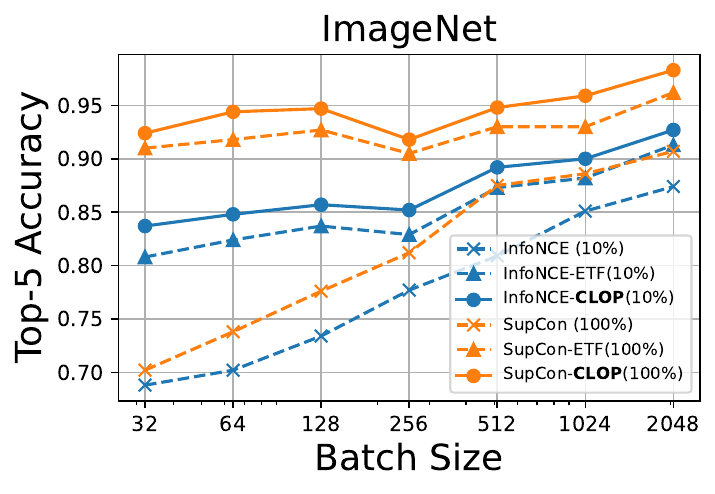}
    \vspace{-0.3cm}
    \caption{Top-1 classification accuracy across different batch sizes. The percentage of labels used for supervised training is indicated in the legend.}
    \vspace{-0.2cm}
    \label{appfig:exp-batchsize}
\end{figure}

\clearpage

\section*{NeurIPS Paper Checklist}

\begin{enumerate}

\item {\bf Claims}
    \item[] Question: Do the main claims made in the abstract and introduction accurately reflect the paper's contributions and scope?
    \item[] Answer: \answerYes{} 
    \item[] Justification: Refer to Abstract.
    \item[] Guidelines:
    \begin{itemize}
        \item The answer NA means that the abstract and introduction do not include the claims made in the paper.
        \item The abstract and/or introduction should clearly state the claims made, including the contributions made in the paper and important assumptions and limitations. A No or NA answer to this question will not be perceived well by the reviewers. 
        \item The claims made should match theoretical and experimental results, and reflect how much the results can be expected to generalize to other settings. 
        \item It is fine to include aspirational goals as motivation as long as it is clear that these goals are not attained by the paper. 
    \end{itemize}

\item {\bf Limitations}
    \item[] Question: Does the paper discuss the limitations of the work performed by the authors?
    \item[] Answer: \answerYes{} 
    \item[] Justification: Refer to Section~\ref{sec-conclusion}.
    \item[] Guidelines:
    \begin{itemize}
        \item The answer NA means that the paper has no limitation while the answer No means that the paper has limitations, but those are not discussed in the paper. 
        \item The authors are encouraged to create a separate "Limitations" section in their paper.
        \item The paper should point out any strong assumptions and how robust the results are to violations of these assumptions (e.g., independence assumptions, noiseless settings, model well-specification, asymptotic approximations only holding locally). The authors should reflect on how these assumptions might be violated in practice and what the implications would be.
        \item The authors should reflect on the scope of the claims made, e.g., if the approach was only tested on a few datasets or with a few runs. In general, empirical results often depend on implicit assumptions, which should be articulated.
        \item The authors should reflect on the factors that influence the performance of the approach. For example, a facial recognition algorithm may perform poorly when image resolution is low or images are taken in low lighting. Or a speech-to-text system might not be used reliably to provide closed captions for online lectures because it fails to handle technical jargon.
        \item The authors should discuss the computational efficiency of the proposed algorithms and how they scale with dataset size.
        \item If applicable, the authors should discuss possible limitations of their approach to address problems of privacy and fairness.
        \item While the authors might fear that complete honesty about limitations might be used by reviewers as grounds for rejection, a worse outcome might be that reviewers discover limitations that aren't acknowledged in the paper. The authors should use their best judgment and recognize that individual actions in favor of transparency play an important role in developing norms that preserve the integrity of the community. Reviewers will be specifically instructed to not penalize honesty concerning limitations.
    \end{itemize}

\item {\bf Theory assumptions and proofs}
    \item[] Question: For each theoretical result, does the paper provide the full set of assumptions and a complete (and correct) proof?
    \item[] Answer: \answerYes{} 
    \item[] Justification: Refer to Section~\ref{sec-theory}, where the range of the learning rate is derived under the assumption that positive sample pairs are successfully merged into the same embedding. Proof can be found in Appendix~\ref{proof1} and Appendix~\ref{proof2}.
    
    \item[] Guidelines:
    \begin{itemize}
        \item The answer NA means that the paper does not include theoretical results. 
        \item All the theorems, formulas, and proofs in the paper should be numbered and cross-referenced.
        \item All assumptions should be clearly stated or referenced in the statement of any theorems.
        \item The proofs can either appear in the main paper or the supplemental material, but if they appear in the supplemental material, the authors are encouraged to provide a short proof sketch to provide intuition. 
        \item Inversely, any informal proof provided in the core of the paper should be complemented by formal proofs provided in appendix or supplemental material.
        \item Theorems and Lemmas that the proof relies upon should be properly referenced. 
    \end{itemize}

    \item {\bf Experimental result reproducibility}
    \item[] Question: Does the paper fully disclose all the information needed to reproduce the main experimental results of the paper to the extent that it affects the main claims and/or conclusions of the paper (regardless of whether the code and data are provided or not)?
    \item[] Answer: \answerYes{} 
    \item[] Justification: The formulation is explained in Section~\ref{sec-model}, and the model's backbone structure is explained in Section~\ref{sec-experiment}.
    \item[] Guidelines:
    \begin{itemize}
        \item The answer NA means that the paper does not include experiments.
        \item If the paper includes experiments, a No answer to this question will not be perceived well by the reviewers: Making the paper reproducible is important, regardless of whether the code and data are provided or not.
        \item If the contribution is a dataset and/or model, the authors should describe the steps taken to make their results reproducible or verifiable. 
        \item Depending on the contribution, reproducibility can be accomplished in various ways. For example, if the contribution is a novel architecture, describing the architecture fully might suffice, or if the contribution is a specific model and empirical evaluation, it may be necessary to either make it possible for others to replicate the model with the same dataset, or provide access to the model. In general. releasing code and data is often one good way to accomplish this, but reproducibility can also be provided via detailed instructions for how to replicate the results, access to a hosted model (e.g., in the case of a large language model), releasing of a model checkpoint, or other means that are appropriate to the research performed.
        \item While NeurIPS does not require releasing code, the conference does require all submissions to provide some reasonable avenue for reproducibility, which may depend on the nature of the contribution. For example
        \begin{enumerate}
            \item If the contribution is primarily a new algorithm, the paper should make it clear how to reproduce that algorithm.
            \item If the contribution is primarily a new model architecture, the paper should describe the architecture clearly and fully.
            \item If the contribution is a new model (e.g., a large language model), then there should either be a way to access this model for reproducing the results or a way to reproduce the model (e.g., with an open-source dataset or instructions for how to construct the dataset).
            \item We recognize that reproducibility may be tricky in some cases, in which case authors are welcome to describe the particular way they provide for reproducibility. In the case of closed-source models, it may be that access to the model is limited in some way (e.g., to registered users), but it should be possible for other researchers to have some path to reproducing or verifying the results.
        \end{enumerate}
    \end{itemize}

\item {\bf Open access to data and code}
    \item[] Question: Does the paper provide open access to the data and code, with sufficient instructions to faithfully reproduce the main experimental results, as described in supplemental material?
    \item[] Answer: \answerYes{} 
    \item[] Justification: Refer the the supplemental codes and readme file.
    \item[] Guidelines:
    \begin{itemize}
        \item The answer NA means that paper does not include experiments requiring code.
        \item Please see the NeurIPS code and data submission guidelines (\url{https://nips.cc/public/guides/CodeSubmissionPolicy}) for more details.
        \item While we encourage the release of code and data, we understand that this might not be possible, so “No” is an acceptable answer. Papers cannot be rejected simply for not including code, unless this is central to the contribution (e.g., for a new open-source benchmark).
        \item The instructions should contain the exact command and environment needed to run to reproduce the results. See the NeurIPS code and data submission guidelines (\url{https://nips.cc/public/guides/CodeSubmissionPolicy}) for more details.
        \item The authors should provide instructions on data access and preparation, including how to access the raw data, preprocessed data, intermediate data, and generated data, etc.
        \item The authors should provide scripts to reproduce all experimental results for the new proposed method and baselines. If only a subset of experiments are reproducible, they should state which ones are omitted from the script and why.
        \item At submission time, to preserve anonymity, the authors should release anonymized versions (if applicable).
        \item Providing as much information as possible in supplemental material (appended to the paper) is recommended, but including URLs to data and code is permitted.
    \end{itemize}

\item {\bf Experimental setting/details}
    \item[] Question: Does the paper specify all the training and test details (e.g., data splits, hyperparameters, how they were chosen, type of optimizer, etc.) necessary to understand the results?
    \item[] Answer: \answerYes{} 
    \item[] Justification: Refer to Section~\ref{subsec-ablation} for ablation studies on the choice of model configuration.
    \item[] Guidelines:
    \begin{itemize}
        \item The answer NA means that the paper does not include experiments.
        \item The experimental setting should be presented in the core of the paper to a level of detail that is necessary to appreciate the results and make sense of them.
        \item The full details can be provided either with the code, in appendix, or as supplemental material.
    \end{itemize}

\item {\bf Experiment statistical significance}
    \item[] Question: Does the paper report error bars suitably and correctly defined or other appropriate information about the statistical significance of the experiments?
    \item[] Answer: \answerYes{} 
    \item[] Justification: Refer to Section~\ref{sec-experiment}, Appendix~\ref{sec-app-exp}.
    \item[] Guidelines:
    \begin{itemize}
        \item The answer NA means that the paper does not include experiments.
        \item The authors should answer "Yes" if the results are accompanied by error bars, confidence intervals, or statistical significance tests, at least for the experiments that support the main claims of the paper.
        \item The factors of variability that the error bars are capturing should be clearly stated (for example, train/test split, initialization, random drawing of some parameter, or overall run with given experimental conditions).
        \item The method for calculating the error bars should be explained (closed form formula, call to a library function, bootstrap, etc.)
        \item The assumptions made should be given (e.g., Normally distributed errors).
        \item It should be clear whether the error bar is the standard deviation or the standard error of the mean.
        \item It is OK to report 1-sigma error bars, but one should state it. The authors should preferably report a 2-sigma error bar than state that they have a 96\% CI, if the hypothesis of Normality of errors is not verified.
        \item For asymmetric distributions, the authors should be careful not to show in tables or figures symmetric error bars that would yield results that are out of range (e.g. negative error rates).
        \item If error bars are reported in tables or plots, The authors should explain in the text how they were calculated and reference the corresponding figures or tables in the text.
    \end{itemize}

\item {\bf Experiments compute resources}
    \item[] Question: For each experiment, does the paper provide sufficient information on the computer resources (type of compute workers, memory, time of execution) needed to reproduce the experiments?
    \item[] Answer: \answerYes{} 
    \item[] Justification: Please refer to the last paragraph of Section~\ref{sec-model}. This paper adopts the SimCLR framework with minor modifications that do not introduce significant computational overhead or affect memory usage and execution time; see \cite{chen2020simple} for further information.
    \item[] Guidelines:
    \begin{itemize}
        \item The answer NA means that the paper does not include experiments.
        \item The paper should indicate the type of compute workers CPU or GPU, internal cluster, or cloud provider, including relevant memory and storage.
        \item The paper should provide the amount of compute required for each of the individual experimental runs as well as estimate the total compute. 
        \item The paper should disclose whether the full research project required more compute than the experiments reported in the paper (e.g., preliminary or failed experiments that didn't make it into the paper). 
    \end{itemize}
    
\item {\bf Code of ethics}
    \item[] Question: Does the research conducted in the paper conform, in every respect, with the NeurIPS Code of Ethics \url{https://neurips.cc/public/EthicsGuidelines}?
    \item[] Answer: \answerYes{} 
    \item[] Justification: This research is conducted following the code of Ethics.
    \item[] Guidelines:
    \begin{itemize}
        \item The answer NA means that the authors have not reviewed the NeurIPS Code of Ethics.
        \item If the authors answer No, they should explain the special circumstances that require a deviation from the Code of Ethics.
        \item The authors should make sure to preserve anonymity (e.g., if there is a special consideration due to laws or regulations in their jurisdiction).
    \end{itemize}

\item {\bf Broader impacts}
    \item[] Question: Does the paper discuss both potential positive societal impacts and negative societal impacts of the work performed?
    \item[] Answer: \answerNA{} 
    \item[] Justification: This paper aims to improve current semi-supervised learning accuracy in vision-related tasks. The authors do not anticipate any direct societal impacts.
    \item[] Guidelines:
    \begin{itemize}
        \item The answer NA means that there is no societal impact of the work performed.
        \item If the authors answer NA or No, they should explain why their work has no societal impact or why the paper does not address societal impact.
        \item Examples of negative societal impacts include potential malicious or unintended uses (e.g., disinformation, generating fake profiles, surveillance), fairness considerations (e.g., deployment of technologies that could make decisions that unfairly impact specific groups), privacy considerations, and security considerations.
        \item The conference expects that many papers will be foundational research and not tied to particular applications, let alone deployments. However, if there is a direct path to any negative applications, the authors should point it out. For example, it is legitimate to point out that an improvement in the quality of generative models could be used to generate deepfakes for disinformation. On the other hand, it is not needed to point out that a generic algorithm for optimizing neural networks could enable people to train models that generate Deepfakes faster.
        \item The authors should consider possible harms that could arise when the technology is being used as intended and functioning correctly, harms that could arise when the technology is being used as intended but gives incorrect results, and harms following from (intentional or unintentional) misuse of the technology.
        \item If there are negative societal impacts, the authors could also discuss possible mitigation strategies (e.g., gated release of models, providing defenses in addition to attacks, mechanisms for monitoring misuse, mechanisms to monitor how a system learns from feedback over time, improving the efficiency and accessibility of ML).
    \end{itemize}
    
\item {\bf Safeguards}
    \item[] Question: Does the paper describe safeguards that have been put in place for responsible release of data or models that have a high risk for misuse (e.g., pretrained language models, image generators, or scraped datasets)?
    \item[] Answer: \answerNA{} 
    \item[] Justification: The release of the model does not have a high risk of misuse. 
    \item[] Guidelines:
    \begin{itemize}
        \item The answer NA means that the paper poses no such risks.
        \item Released models that have a high risk for misuse or dual-use should be released with necessary safeguards to allow for controlled use of the model, for example by requiring that users adhere to usage guidelines or restrictions to access the model or implementing safety filters. 
        \item Datasets that have been scraped from the Internet could pose safety risks. The authors should describe how they avoided releasing unsafe images.
        \item We recognize that providing effective safeguards is challenging, and many papers do not require this, but we encourage authors to take this into account and make a best faith effort.
    \end{itemize}

\item {\bf Licenses for existing assets}
    \item[] Question: Are the creators or original owners of assets (e.g., code, data, models), used in the paper, properly credited and are the license and terms of use explicitly mentioned and properly respected?
    \item[] Answer: \answerYes{} 
    \item[] Justification: Refer to Section~\ref{sec-experiment}.
    \item[] Guidelines:
    \begin{itemize}
        \item The answer NA means that the paper does not use existing assets.
        \item The authors should cite the original paper that produced the code package or dataset.
        \item The authors should state which version of the asset is used and, if possible, include a URL.
        \item The name of the license (e.g., CC-BY 4.0) should be included for each asset.
        \item For scraped data from a particular source (e.g., website), the copyright and terms of service of that source should be provided.
        \item If assets are released, the license, copyright information, and terms of use in the package should be provided. For popular datasets, \url{paperswithcode.com/datasets} has curated licenses for some datasets. Their licensing guide can help determine the license of a dataset.
        \item For existing datasets that are re-packaged, both the original license and the license of the derived asset (if it has changed) should be provided.
        \item If this information is not available online, the authors are encouraged to reach out to the asset's creators.
    \end{itemize}

\item {\bf New assets}
    \item[] Question: Are new assets introduced in the paper well documented and is the documentation provided alongside the assets?
    \item[] Answer: \answerYes{} 
    \item[] Justification: The released code is well documented.
    \item[] Guidelines:
    \begin{itemize}
        \item The answer NA means that the paper does not release new assets.
        \item Researchers should communicate the details of the dataset/code/model as part of their submissions via structured templates. This includes details about training, license, limitations, etc. 
        \item The paper should discuss whether and how consent was obtained from people whose asset is used.
        \item At submission time, remember to anonymize your assets (if applicable). You can either create an anonymized URL or include an anonymized zip file.
    \end{itemize}

\item {\bf Crowdsourcing and research with human subjects}
    \item[] Question: For crowdsourcing experiments and research with human subjects, does the paper include the full text of instructions given to participants and screenshots, if applicable, as well as details about answerNA (if any)? 
    \item[] Answer: \answerNA{} 
    \item[] Justification: No crowdsourcing experiment is involved.
    \item[] Guidelines:
    \begin{itemize}
        \item The answer NA means that the paper does not involve crowdsourcing nor research with human subjects.
        \item Including this information in the supplemental material is fine, but if the main contribution of the paper involves human subjects, then as much detail as possible should be included in the main paper. 
        \item According to the NeurIPS Code of Ethics, workers involved in data collection, curation, or other labor should be paid at least the minimum wage in the country of the data collector. 
    \end{itemize}

\item {\bf Institutional review board (IRB) approvals or equivalent for research with human subjects}
    \item[] Question: Does the paper describe potential risks incurred by study participants, whether such risks were disclosed to the subjects, and whether Institutional Review Board (IRB) approvals (or an equivalent approval/review based on the requirements of your country or institution) were obtained?
    \item[] Answer: \answerNA{} 
    \item[] Justification: No human subjects are involved.
    \item[] Guidelines:
    \begin{itemize}
        \item The answer NA means that the paper does not involve crowdsourcing nor research with human subjects.
        \item Depending on the country in which research is conducted, IRB approval (or equivalent) may be required for any human subjects research. If you obtained IRB approval, you should clearly state this in the paper. 
        \item We recognize that the procedures for this may vary significantly between institutions and locations, and we expect authors to adhere to the NeurIPS Code of Ethics and the guidelines for their institution. 
        \item For initial submissions, do not include any information that would break anonymity (if applicable), such as the institution conducting the review.
    \end{itemize}

\item {\bf Declaration of LLM usage}
    \item[] Question: Does the paper describe the usage of LLMs if it is an important, original, or non-standard component of the core methods in this research? Note that if the LLM is used only for writing, editing, or formatting purposes and does not impact the core methodology, scientific rigorousness, or originality of the research, declaration is not required.
    \item[] Answer: \answerNo{} 
    \item[] Justification: No important, original, or non-standard component of the core methods in this research is involved with LLM usage.
    \item[] Guidelines:
    \begin{itemize}
        \item The answer NA means that the core method development in this research does not involve LLMs as any important, original, or non-standard components.
        \item Please refer to our LLM policy (\url{https://neurips.cc/Conferences/2025/LLM}) for what should or should not be described.
    \end{itemize}

\end{enumerate}


\end{document}